\documentclass{article}


\usepackage[preprint]{neurips_2023}




\usepackage[utf8]{inputenc} 
\usepackage[T1]{fontenc}    
\usepackage{hyperref}       
\usepackage{url}            
\usepackage{booktabs}       
\usepackage{amsfonts}       
\usepackage{nicefrac}       
\usepackage{microtype}      
\usepackage{xcolor}         
\usepackage{natbib}
\usepackage{amsmath}
\usepackage{bbding}
\usepackage{graphicx,float}
\usepackage{threeparttable}
\usepackage{comment}

\usepackage{array}
\newcolumntype{L}[1]{>{\raggedright\let\newline\\\arraybackslash\hspace{0pt}}m{#1}}
\newcolumntype{C}[1]{>{\centering\let\newline\\\arraybackslash\hspace{0pt}}m{#1}}
\newcolumntype{R}[1]{>{\raggedleft\let\newline\\\arraybackslash\hspace{0pt}}m{#1}}

\usepackage{amsthm}
\newtheorem{definition}{Definition}
\newtheorem{example}{Example}
\newtheorem{theorem}{Theorem}
\newtheorem{remark}{Remark}
\newtheorem{lemma}{Lemma}
\newtheorem{proposition}{Proposition}

\title{Exploring the Complexity of Deep Neural Networks through Functional Equivalence}

%

\author{%
	Guohao Shen 
	\\
	Department of Applied Mathematics\\
	The Hong Kong Polytechnic University\\
	Hung Hom, Kowloon, Hong Kong SAR, China\\
	\texttt{guohao.shen@polyu.edu.hk} \\
}

\begin{document}

	\maketitle

	\begin{abstract}
	We investigate the complexity of deep neural networks through the lens of functional equivalence, which posits that different parameterizations can yield the same network function. Leveraging the equivalence property, we present a novel bound on the covering number for deep neural networks, which reveals that the complexity of neural networks can be reduced. Additionally, we demonstrate that functional equivalence benefits optimization, as overparameterized networks tend to be easier to train since increasing network width leads to a diminishing volume of the effective parameter space. These findings can offer valuable insights into the phenomenon of overparameterization and have implications for understanding generalization and optimization in deep learning.
	\end{abstract}

	\section{Introduction}

Artificial neural networks, particularly deep and wide ones, have shown remarkable success in various applications widely in machine learning and artificial intelligence. However, one of the major challenges in understanding the success is to explain their generalization ability when they are very large and prone to overfitting data \citep{neyshabur2014search,neyshabur2017exploring,razin2020implicit}. 

Theoretical studies have suggested that the generalization error can be related to the complexity, approximation power, and optimization of deep neural networks. Larger neural networks are proved to possess better approximation power \citep{yarotsky2017error,lu2020deep,zhou2020universality}, but may exhibit larger complexity and generalization gaps \citep{bartlett2017spectrally,mohri2018foundations,bartlett2019nearly}, and can be more challenging to optimize \citep{glorot2011deep}. However, some aspects of deep learning initially appeared to contradict common sense: overparameterized networks tend to be easier to train \citep{frankle2018the,allen2019convergence,du2019gradient} and exhibit better generalization \citep{belkin2019reconciling,neyshabur2018the,novak2018sensitivity}. Although the model class's capacity was immense, deep networks did not tend to overfit \citep{zhang2017understanding}.

Recent studies have highlighted that the functional form of neural networks may be less complex than their parametric form \cite{bui2020functional,stock2022embedding,grigsby2022functional}, as networks with different parameters may implement the same function. This insight provides us with a fresh perspective for reconsidering how overparameterization truly affects the generalization.

In this work, we quantitatively characterize the redundancy in the parameterization of deep neural networks and derive a complexity measure for these networks based on functional equivalence. We analyze the results to gain insights into generalization and optimization in deep learning.

\subsection{Related work}
The issue of redundancy or identification of parameterization of neural networks has been noted since 1990 in \citet{hecht1990algebraic}. Subsequent studies for neural networks with Tanh and sigmoid activations \citep{chen1993geometry,fefferman1993recovering,kuurkova1994functionally} have proved that given the input-output mapping of a Tanh neural network, its architecture can be determined and weights are identified up to permutations and sign flips. Recently, the identifiability of parameterization in deep neural networks, particularly ReLU networks, has received considerable attention \citep{elbrachter2019degenerate,bui2020functional,bona2021parameter,dereich2022minimal,stock2022embedding,grigsby2022functional,grigsby2022local}. Most recently, \cite{bui2020functional} demonstrated that ReLU networks with non-increasing widths are identifiable up to permutation and scaling of weight matrices. With redundant parameterization, the weight space of deep neural networks can exhibit symmetric structures, which leads to implications for optimization \citep{neyshabur2015path,badrinarayanan2015symmetry,stock2018equinormalization}. These studies suggest that naive loss gradient is sensitive to reparameterization by scaling, and proposed alternative, scaling-invariant optimization procedures. In addition, the redundancy or identification properties of neural networks are closely related to the study of inverse stability \citep{elbrachter2019degenerate,rolnick2020reverse,bona2021parameter,petersen2021topological,stock2022embedding}, which investigates the possibility for one to recover the parameters (weights and biases) of a neural network. 

The complexity of neural networks in terms of their parameterization redundancy has received limited attention. Among the few relevant studies, \cite{grigsby2022local} and \cite{grigsby2022functional} are worth mentioning. In \cite{grigsby2022local}, the authors investigated local and global notions of topological complexity for fully-connected feedforward ReLU neural network functions. On the other hand, \cite{grigsby2022functional} defined the functional dimension of ReLU neural networks based on perturbations in parameter space. They explored functional redundancy and conditions under which the functional dimension reaches its theoretical maximum. However, it should be noted that these results on functional dimension do not directly translate into generalization error bounds for deep learning algorithms.

The complexity of a class of functions is closely related to generalization error, with larger complexities often leading to larger generalization error \citep{bartlett2017spectrally,mohri2018foundations}. Various complexity upper bounds have been studied for deep neural networks using different measurements, such as Rademacher complexity \citep{neyshabur2015norm,golowich2018size,li2018tighter}, VC-dimension and Pseudo dimension \citep{baum1988size,goldberg1993bounding,anthony1999neural,bartlett2019nearly}, and covering number \citep{anthony1999neural,neyshabur2017pac,bartlett2017spectrally,lin2019generalization}. These measurements characterize the complexity of the class of neural networks and are influenced by hyperparameters like network depth, width, number of weights and bias vectors, and corresponding norm bounds. While these bounds are not directly comparable in magnitude, they are closely related and can be converted to facilitate comparisons \citep{anthony1999neural,mohri2018foundations}.

\subsection{Our contributions}

We summarize our contributions as follows:	 	

\begin{itemize}	
	\item [1.] We make use of the permutation equivalence property to firstly obtain a tighter upper bound on the covering number of neural networks, which improves existing results by factorials of the network widths and provides unprecedented insights into the intricate relationship between network complexity and layer width.
	\item [2.] We improve existing covering number bounds in the sense that our results hold for neural networks with bias vectors and general activation functions. Since bias terms are indispensable for the approximation power of neural networks, our results are useful in both theory and practice. Additionally, we express our bound explicitly in terms of the network's width, depth, size, and the norm of the parameters. 
	\item [3.] We discuss the implications of our findings for understanding generalization and optimization. In particular, we found that overparameterized networks tend to be easier to train in the sense that increasing the width of neural networks leads to a vanishing volume of the effective parameter space.
\end{itemize}

The remainder of the paper is organized as follows. In section \ref{sec_funceq}, we introduce the concept of functional equivalence and investigate the permutation invariance property of general feedforward neural networks. In section \ref{sec_complexity}, we derive novel covering number bounds for shallow and deep neural networks by exploiting the permutation invariance property and compare our results with existing ones. 
In section \ref{sec_ext}, we discuss the extension to convolutional, residual and attention-based networks. In section \ref{sec_app}, we demonstrate the theoretical implication of permutation invariance on the optimization complexity in deep learning and discuss the implications of our results on generalization. Finally, we discuss the limitations of this study and future research directions in section \ref{sec_conclusion}. All technical proofs are included in the Appendix.

\begin{table*}[ht]
	\caption{A comparison of recent results on the complexity of feedforward neural networks.}
	\vskip 0.1in
	\centering
	\label{tab:compare}
	\begin{small}
		\begin{sc}
			\begin{tabular}{@{}C{2.9cm} C{4.5cm} C{1.35cm} C{1.45cm} C{1.95cm}@{}}
				\toprule
				Paper & Complexity & Explicit 
				& Bias Vectors &  Permutation Invariance \\ \midrule
				\cite{bartlett2017spectrally} & $B_x^2(\bar{\rho}\bar{s})^2\mathcal{U}\log(W)/\epsilon^2$ & \XSolidBrush & \XSolidBrush  & \XSolidBrush \\
				\cite{neyshabur2017pac} & $B_x^2(\bar{\rho}\bar{s})^2\mathcal{S}L^2\log(WL)/\epsilon^2$ & \XSolidBrush & \XSolidBrush & \XSolidBrush \\
				\cite{lin2019generalization}& $B_x(\bar{\rho}\bar{s})\mathcal{S}^2L/\epsilon$ & \XSolidBrush & \XSolidBrush  & \XSolidBrush \\
				\cite{bartlett2019nearly} & $L\mathcal{S}\log(\mathcal{S})\log(\bar{\rho}\bar{s}B_x/\epsilon)$  & \Checkmark & \Checkmark  & \XSolidBrush \\ 
				This paper & $L\mathcal{S}\log(\bar{\rho}\bar{s}B_x^{1/L}/((d_1!\cdots d_L!)^{1/\mathcal{S}}\epsilon)^{1/L})$ & \Checkmark & \Checkmark & \Checkmark \\ 
				\bottomrule
			\end{tabular}%
		\end{sc}
	\end{small}
	\begin{tablenotes}
		{\footnotesize
			\item Notations: $\mathcal{S}$ number of parameters; $\mathcal{U}$ number of hidden neurons; $L$ number of hidden layers;  $W$ maximum hidden layers width; $B_x$, L2 norm of input; $\bar{\rho}=\Pi_{j=1}^L\rho_j$, products of Lipschitz constants of activations; $\bar{s}=\Pi_{j=1}^Ls_j$, products of spectral norms of hidden layer weight matrices; $\epsilon$, radius for covering number.
		}
	\end{tablenotes}
\end{table*}

The remainder of the paper is organized as follows. In section \ref{sec_funceq}, we introduce the concept of functional equivalence and investigate the permutation invariance property of general feedforward neural networks. In section \ref{sec_complexity}, we derive novel covering number bounds for shallow and deep neural networks by exploiting the permutation invariance property and compare our results with existing ones. In section \ref{sec_ext}, we discuss the extension to convolutional, residual and attention-based networks. In section \ref{sec_app}, we demonstrate the theoretical implication of permutation invariance on the optimization complexity in deep learning and discuss the implications of our results on generalization. Finally, we discuss the limitations of this study and future research directions in section \ref{sec_conclusion}. All technical proofs are included in the Appendix.

	\section{Functionally equivalent Neural Networks}\label{sec_funceq}
		
		A feedforward neural network is a fully connected artificial neural network consisting of multiple layers of interconnected neurons. The network's architecture can be expressed as a composition of linear maps and activations. The functional form of an $L$-layer feedforward neural network is determined by its weight matrices, bias vectors, and activation functions:
		\begin{equation}\label{dnn}
			f(x;\theta) = \mathcal{A}_{L+1} \circ \sigma_L \circ \mathcal{A}_L \circ \cdots \circ \sigma_2 \circ \mathcal{A}_2 \circ \sigma_1 \circ \mathcal{A}_1(x).
		\end{equation}
		Here, $\mathcal{A}_l(x) = W^{(l)}x + b^{(l)}$ is the linear transformation for layer $l$, where $W^{(l)}$ and $b^{(l)}$ are the weight matrix and bias vector respectively. The activation function $\sigma_l$ is applied element-wise to the output of $\mathcal{A}_l$, and can be different across layers. The collection of weight matrices and bias vectors is denoted by $\theta = (W^{(1)}, b^{(1)}, \ldots, W^{(L+1)}, b^{(L+1)})$. The input $x$ is propagated through each layer of the network to produce the output $f(x;\theta)$.
		
		The parameterization of a neural network can be redundant, with different parameter sets producing identical function outputs. This redundancy arises from the non-identifiability of weight matrices or activation functions.

		\begin{definition}[Functionally-Equivalent Neural Networks]
			Two neural networks $f({x}; {\theta}_1)$ and $f({x}; {\theta}_2)$  are said to be functionally-equivalent on $\mathcal{X}$ if they produce the same input-output function for all possible inputs, i.e.,
			\begin{equation}
				f_1({x};{\theta}_1) = f_2({x}; {\theta}_2) \quad \forall {x} \in \mathcal{X},
			\end{equation}
			where $\mathcal{X}$ is the input space and ${\theta}_1$ and ${\theta}_2$ denote the sets of parameters of the two networks, respectively. 
		\end{definition}
		
		Neural networks with a fixed architecture can have functionally-equivalent versions through weight scaling, sign flips, and permutations. This can even occur across networks with different architectures. 
		In this paper, we focus on the complexity of a specific class of neural networks with fixed architecture but varying parameterizations. We provide examples of functionally-equivalent shallow neural networks to illustrate this concept. 
		
		\begin{example}[Scaling]\label{example_scaling}
			Consider two shallow neural networks parameterized by $\theta_1=(W^{(1)}_1,b_1^{(1)},W^{(2)}_1,b_1^{(2)})$ and $\theta_2=(W^{(1)}_2,b_2^{(1)},W^{(2)}_2,b_2^{(2)})$, defined as:
			\begin{align*}
				&f(x;\theta_1)=W^{(2)}_1\sigma(W^{(1)}_1x+b_1^{(1)})+b^{(2)}_1,\\ &f(x;\theta_2)=W^{(2)}_2\sigma(W^{(1)}_2x+b_2^{(1)})+b^{(2)}_2
			\end{align*}
			respectively, where $x \in \mathbb{R}^n$ is the input to the network and $\sigma$ satisfies
			$\sigma(\lambda x)=\lambda \sigma(x)$
			for all $x \in \mathbb{R}^n$ and $\lambda>0$. If there exists a scalar value $\alpha>0$ such that:
			\begin{align*}
				(W^{(1)}_2, b^{(1)}_2) = (\alpha W^{(1)}_1, \alpha b^{(1)}_1) \quad{\rm and} \quad W^{(2)}_2 = \frac{1}{\alpha} W^{(2)}_1,
			\end{align*}
			then $f(\cdot;\theta_1)$ and $f(\cdot;\theta_2)$ are functionally equivalent.
		\end{example}
		
		Scaling invariance property is applicable to ReLU, Leaky ReLU, and piecewise-linear activated neural networks. Specifically, for all ${x}\in \mathbb{R}^n$ and $\lambda\ge0$, we have $\sigma(\lambda x)=\lambda \sigma(x)$ for $\sigma$ being the ReLU or Leaky ReLU function. It is worth noting that the above example is presented for shallow neural networks, but the scaling invariance property can happen in deep networks across any two consecutive layers.

		\begin{example}[Sign Flipping]\label{example_sign}
			Consider two shallow neural networks $f(\cdot;\theta_1)$ and $f(\cdot;\theta_2)$ defined in Example \ref{example_scaling} with  $\sigma$ being an odd function, that is	$\sigma(-x)=-\sigma(x)$ for all $x \in \mathbb{R}^n$. 
			If 
			\begin{align*}
				(W^{(1)}_2, b^{(1)}_2) = (- W^{(1)}_1, - b^{(1)}_1) \quad{\rm and} \quad W^{(2)}_2 = - W^{(2)}_1,
			\end{align*}
			then $f(x;\theta_1)$ and $f(x;\theta_2)$ are functionally equivalent.
		\end{example}
		
		Sign flipping invariance property can happen for neural networks activated by Tanh, Sin and odd functions. It is worth noting that Sigmoid does not have a straightforward Sign flipping invariance. While Sigmoid is an odd function up to a constant 0.5, it can be Sign-flipping invariant up-to a constant and the constant can be mitigated by using a bias\citep{martinelli2023expand}. The sign flipping invariance property can also be generalized to deep neural networks across any two consecutive layers.
		
		\begin{example}[Permutation]\label{example_perm}
			Consider two shallow neural networks $f(\cdot;\theta_1)$ and $f(\cdot;\theta_2)$ defined in Example \ref{example_scaling} with  $\sigma$ being a general activation function. Let the dimension of the hidden layer of $f(x;\theta_1)$ and $f(x;\theta_2)$ be denoted by $m$. If there exists an $m \times m$ permutation matrix $P$ such that
			\begin{align*}
				(P W^{(1)}_2, P b^{(1)}_2) = (W^{(1)}_1, b^{(1)}_1) \quad{\rm and} \quad W^{(2)}_2 P = W^{(2)}_1,
			\end{align*}
			then $f(x;\theta_1)$ and $f(x;\theta_2)$ are functionally equivalent.
		\end{example}
		
		The feedforward neural networks are built of linear transformations and activations, and it is intuitive that simply re-indexing neurons in a hidden layer and the corresponding rows of the weights matrix and bias vector will lead to a functionally equivalent network. The permutation invariance is the most basic type of equivalence for neural networks since it does not rely on any specific properties of activation functions, while scaling and sign flipping invariance are activation-dependent properties. A comparison on functional equivalence properties on neural network with commonly-used activation functions is presented in Table \ref{tab:1}.

		\begin{table*}[ht]
			\centering
			\caption{Functional equivalence property for networks with different activation functions.}
			\label{tab:1}
			\vskip 0.1in
			\begin{small}
				\begin{sc}
					\begin{tabular}{@{}C{2cm} C{5cm} C{2cm} C{1.8cm} C{2cm} @{}}
						\toprule
						Activation & Formula                       & Sign flipping             & Scaling                   & Permutation               \\ \midrule
						Sigmoid    & $[1+\exp(-x)]^{-1}$      & \XSolidBrush & \XSolidBrush & \Checkmark \\
						Tanh                 & $[1-\exp(-2x)]/[1+\exp(-2x)]$ & \Checkmark    & \XSolidBrush & \Checkmark \\
						ReLU                 & $\max\{0,x\}$                 & \XSolidBrush & \Checkmark & \Checkmark \\
						Leaky ReLU & $\max\{ax,x\}$ for $a>0$ & \XSolidBrush & \Checkmark   & \Checkmark \\ \bottomrule
					\end{tabular}%
				\end{sc}
			\end{small}
		\end{table*}
		
		Next, we derive sufficient conditions fo feed-forward neural networks (FNNs) to be permutation-equivalent.
		
		\begin{proposition}[Permutation equivalence for deep FNNs]\label{thm_perm}
			Consider two neural networks $f(x;\theta_1)$ and $f(x;\theta_2)$ with the same activations $\sigma_1,\ldots,\sigma_L$ and architecture  
			{\footnotesize
				$$f(x;\theta) = W^{(L+1)} \sigma_L(\cdots\sigma_1(W^{(1)}_1 x+b^{(1)}_1)\cdots)+ b^{(L)}_1) +b^{(L+1)}$$
			}
			but parameterized by different parameters
			\begin{align*}
				&\theta_j = (W^{(1)}_j, b^{(1)}_j, \ldots, W^{(L+1)}_j, b^{(L+1)}_j),\quad j=1,2
			\end{align*}
			respectively, where $x \in \mathbb{R}^n$ is the input to the network. Let 
			$P^\top$ denote the transpose of matrix $P$. If there exists permutation matrices $P_1,\ldots,P_{L}$ such that
			\begin{align*}
				W^{(1)}_1 = P_{1} W^{(1)}_2, \qquad\qquad\hspace*{0.15cm} & b^{(1)}_1 = P_{1} b^{(1)}_2,\\
				W^{(l)}_1 = P_{l} W^{(l)}_2 P_{l-1}^\top, \qquad\quad & b^{(l)}_1 = P_{l} b^{(1)}_2, \quad l=2,\ldots,L\\
				W^{(L+1)}_1 = W^{(L+1)}_2 P_{L}^\top, \qquad & b^{(L)}_1 = b^{(L)}_2,
			\end{align*}
			then $f(x;\theta_1)$ and $f(x;\theta_2)$ are functionally equivalent.
		\end{proposition}
		
		Proposition \ref{thm_perm} describes the relationship between the parameters of two permutation-equivalent deep feedforward neural networks. This relationship can be used to create functionally equivalent networks given fixed architectures. It's important to note that although permutation invariance is sufficient for functional equivalence of feedforward neural networks, it's not always necessary. \cite{petzka2020notes} gave a complete characterization for fully-connected networks with two layers. While for general networks, certain restrictions on the architecture and activation function are required to fully characterize  \citep{sussmann1992uniqueness,kuurkova1994functionally,bui2020functional} and recover the parameters of a network \citep{martinelli2023expand}. This study focuses only on utilizing permutation invariance to investigate neural network complexity.

		In this section, we analyze the complexity of a class of feedforward neural networks by examining the redundancy that arises from permutation invariance. Specifically, we study the covering number of real-valued, deep feedforward neural networks that share the same architecture but have different parameterization.
		
		Let the vector $(d_0, d_1, \ldots, d_L)$ represent the dimensions of the layers of the neural network $f(x;\theta)$ defined in (\ref{dnn}), where $d_{L+1}=1$ as the output is real-valued. Note that the bias vectors in hidden layers contain $\mathcal{U}:=\sum_{i=1}^{L} d_i$ entries, and the weight matrices together with bias vectors contain $\mathcal{S}:=\sum_{i=0}^{L} d_i\times d_{i+1}+d_{i+1}$ entries in total. We define the parameter space of $\theta$ as $\Theta=[-B,B]^\mathcal{S}$ for some $B\ge 1$, which is closed for permutation operations and ensures the absolute value of the weight matrix and bias vector entries are bounded by $B$. We set $\Theta=[-B,B]^\mathcal{S}$ for some $B\ge 1$. The setting is in line with complexity studies as in \citep{neyshabur2015norm,bartlett2017spectrally,golowich2018size} with norm controls. The setting of bounded parameter space can also correspond to the observed implicit regularization phenomena in SGD-based optimization algorithms \citep{neyshabur2014search,gunasekar2018characterizing,gunasekar2018implicit}, which can lead to minimum-norm solutions (e.g., for least square problems). We do not specify any activation functions $\sigma_1,\ldots,\sigma_L$ since we consider general deep feedforward neural networks. Finally, the class of feedforward neural networks we consider is denoted as

			\begin{align}\label{dnns}
				&\mathcal{F}(L,d_0,d_1,\ldots,d_L,B)=\{ f(\cdot;\theta): \mathbb{R}^{d_0}\to \mathbb{R} {\rm\ is\ defined\ in\ (\ref{dnn})}:\theta\in[-B,B]^\mathcal{S}\}.
			\end{align}

		\subsection{Shallow Feed-Forward Neural Networks}
		It is well-known that a shallow neural network with a single hidden layer has universal approximation properties \citep{cybenko1989approximation,hornik1991approximation} and is sufficient for many learning tasks \citep{hanin2019deep,hertrich2021towards}. We begin our investigation by considering shallow neural networks of the form
			\begin{align}\label{shallow}
				&\mathcal{F}(1,d_0,d_1,B)=\{ f(x;\theta)=W^{(2)}\sigma_1(W^{(1)}x+b^{(1)})+b^{(2)}: \theta\in[-B,B]^{\mathcal{S}}\}
		\end{align}
		where the total number of parameters is given by $\mathcal{S}=(d_0+2)\times d_1+1$. By Theorem \ref{thm_perm}, for any $\theta=(W^{(1)},b^{(1)},W^{(2)},b^{(2)})\in\Theta$ and any permutation matrix $P$, $\tilde{\theta}=(PW^{(1)},Pb^{(1)},W^{(2)}P^\top,b^{(2)})\in\Theta$ will produce the same input-out function. Actually, permutation invariance leads to equivalence classes of parameters that yield the same realization, and we can obtain a set of representatives from equivalence classes. A canonical choice is
		$$\Theta_0:=\{\theta\in[-B,B]^{\mathcal{S}}: b^{(1)}_1\ge b^{(1)}_2\ge\cdots\ge b^{(1)}_{d_1} \},$$
		where the set of representatives is by restricting the bias vector $b^{(1)}=(b^{(1)}_1,\ldots,b^{(1)}_{d_1})^\top$ to have descending components. Alternatively, we can sort the first component of the rows of $W^{(1)}$ to obtain a set of representatives.	
		It is worth mentioning that $\Theta_0$ may not be the minimal set of representatives since there may be other symmetries within $\Theta_0$. We did not further utilize these additional symmetries to reduce  $\Theta_0$ to be smaller since other symmetries can depend on the specific properties of the activation functions in the neural networks. In this work, we specifically employ permutation invariance (which holds for networks with any activations) to obtain a representative set $\Theta_0$. 
		
		The set of representatives $\Theta_0$ has two important properties
		\begin{itemize}
			\item Neural networks $\{f(\cdot;\theta):\theta\in\Theta_0\}$ parameterized by $\Theta_0$ contains all the functions in $\{ f(\cdot;\theta):\theta\in\Theta\}$, i.e.,
			$$\{f(\cdot;\theta):\theta\in\Theta_0\}=\{f(\cdot;\theta):\theta\in\Theta\}.$$
			\item The volume (in terms of Lebesgue measure) of the set of representatives $\Theta_0$ is $(1/d_1!)$ times smaller that that of the parameter space $\Theta$, i.e.,
			$${\rm Volume}(\Theta)=d_1!\times{\rm Volume}(\Theta_0).$$
		\end{itemize}
		The first property holds naturally since any parameter $\theta\in\Theta$ has a permuted version in $\Theta_0$. Regarding the second property, we note that both $\Theta$ and $\Theta_0$ belong to the Euclidean space $\mathbb{R}^{\mathcal{S}}$ (where $\mathcal{S}$ denotes the number of parameters), then their volumes in terms of Lebesgue measure can be calculated. For any parameter $\theta$ with distinct components in the bias vector $b^{(1)}=(b^{(1)}_1,\ldots,b^{(1)}_{d_1})^\top$, permutation of the bias vector and corresponding weights can lead to $(d_1!)$ distinct equivalent parameters $\theta_1,\ldots,\theta_{d_1!}$. Then this implies a $(d_1!)$ times smaller volume of representative set $\Theta_0$ compared to that of $\Theta$.
		
		Subsequently, the two properties suggest that $\Theta_0$ can be a representative parameterization of neural networks when they are viewed only as input-output functions. Based on these observations, we derive improved complexities of the class of neural networks in terms of its covering number.
		\begin{definition}[Covering Number]
			Let $\mathcal{F}={f:\mathcal{X}\to\mathbb{R}}$ be a class of functions. We define the supremum norm of $f\in\mathcal{F}$ as $\Vert f\Vert_\infty:=\sup_{x\in\mathcal{X}}\vert f(x)\vert$. For a given $\epsilon>0$, we define the covering number of $\mathcal{F}$ with radius $\epsilon$ under the norm $\Vert \cdot\Vert_\infty$ as the least cardinality of a subset $\mathcal{G}\subseteq\mathcal{F}$ satisfying
			$$\sup_{f\in\mathcal{F}}\min_{g\in\mathcal{G}}\Vert f-g\Vert_\infty\le \epsilon.$$
			Denoted by $\mathcal{N}(\mathcal{F},\epsilon,\Vert\cdot\Vert_\infty)$, the covering number measures the minimum number of functions in $\mathcal{F}$ needed to cover the set of functions within a distance of $\epsilon$ under the supremum norm.
		\end{definition}
		The covering number $\mathcal{N}(\mathcal{F},\epsilon,\Vert\cdot\Vert_\infty)$ provides a quantitative measure of the complexity of the class of functions $\mathcal{F}$ under the supremum norm, with smaller values indicating simpler classes. Covering numbers, along with Rademacher complexity, VC dimension, and Pseudo dimension, are essential complexity measures in the analysis of learning theories and in estimating generalization errors. Although these measures are different, they are correlated with each other, and we introduce the detailed correlations in Appendix.
		
		\begin{remark}
			We define the covering number of a class of functions in the uniform sense. This is an extension of the canonical definition of covering numbers, which was originally developed for subsets in Euclidean space. While most existing studies of covering numbers for function spaces consider the image of the functions on a finite sample \citep{anthony1999neural,bartlett2017spectrally}, our definition is formulated directly in terms of the function space itself, without requiring a finite sample or any other auxiliary construction. 
		\end{remark}
		
		\begin{theorem}[Covering number of shallow neural networks]\label{cv_shallow}
			Consider the class of single hidden layer neural networks $\mathcal{F}:=\mathcal{F}(1,d_0,d_1,B)$ defined in (\ref{shallow}) parameterized by $\theta\in\Theta=[-B,B]^\mathcal{S}$. Suppose the radius of the domain $\mathcal{X}$ of $f\in\mathcal{F}$ is bounded by some $B_x>0$, and the activation $\sigma_1$ is continuous. Then for any $\epsilon>0$, the covering number
			\begin{align}\label{cv_shallow1}
				\mathcal{N}(\mathcal{F},\epsilon,\Vert\cdot\Vert_\infty)\le (16B^2(B_x+1)\sqrt{d_0}d_1/\epsilon)^\mathcal{S} \times\rho^{\mathcal{S}_h}/d_1!,
			\end{align}
			where $\rho$ denotes the Lipschitz constant of $\sigma_1$ on the range of the hidden layer (i.e., $[-\sqrt{d_0}B(B_x)+1),\sqrt{d_0}B(B_x+1)]$), and $\mathcal{S}_h=d_0d_1+d_1$ is the total number of parameters in the linear transformation from input to the hidden layer, and $\mathcal{S}=d_0\times d_1+2d_1+1$ is the total number of parameters.
		\end{theorem}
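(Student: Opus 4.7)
The plan is to exploit permutation invariance to restrict the parameter domain to $\Theta_0$, derive an anisotropic perturbation bound for the parameter-to-function map, and then apply a discrete covering argument that converts the volume saving $\mathrm{Vol}(\Theta_0) = \mathrm{Vol}(\Theta)/d_1!$ into a factorial saving in the covering number. By Example~\ref{example_perm} and Theorem~\ref{thm_perm}, every $\theta\in\Theta$ is permutation-equivalent to some $\theta\in\Theta_0$, so $\mathcal{N}(\mathcal{F},\epsilon,\Vert\cdot\Vert_\infty) = \mathcal{N}(\{f(\cdot;\theta):\theta\in\Theta_0\},\epsilon,\Vert\cdot\Vert_\infty)$; it therefore suffices to cover the functions parameterized by the smaller set $\Theta_0$.

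I would then establish an anisotropic perturbation bound of the form
\begin{equation*}
\Vert f(\cdot;\theta) - f(\cdot;\theta')\Vert_\infty \le L_h\,\Vert\theta^{(h)} - \theta'^{(h)}\Vert_\infty + L_o\,\Vert\theta^{(o)} - \theta'^{(o)}\Vert_\infty,
\end{equation*}
where $\theta^{(h)}$ collects the $\mathcal{S}_h=d_0 d_1+d_1$ hidden-layer parameters and $\theta^{(o)}$ the remaining output-layer parameters. Writing $f(x;\theta)-f(x;\theta')$ as the telescoping sum $W^{(2)}[\sigma_1(A)-\sigma_1(A')] + (W^{(2)}-W'^{(2)})\sigma_1(A') + (b^{(2)}-b'^{(2)})$ with $A=W^{(1)}x+b^{(1)}$, and then applying the triangle inequality together with $\Vert W^{(2)}\Vert_2 \le \sqrt{d_1}B$, $\Vert A\Vert_2 \le \sqrt{d_1}B(\sqrt{d_0}B_x+1)$, and the $\rho$-Lipschitz property of $\sigma_1$ on the indicated range, yields $L_h = O(B\rho\,d_1(\sqrt{d_0}B_x+1))$ and $L_o = O(B\,d_1(\sqrt{d_0}B_x+1))$. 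The key asymmetry is that hidden-layer parameters act through $\sigma_1$ and therefore acquire a factor $\rho$, while output-layer parameters do not, which is exactly what will produce $\rho^{\mathcal{S}_h}$ (not $\rho^{\mathcal{S}}$) in the final bound.

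Finally, I would cover $\Theta_0$ by an anisotropic grid, at scale $\delta/\rho$ in the hidden-layer coordinates and at scale $\delta$ in the output-layer coordinates, with $\delta$ chosen so that $L_h(\delta/\rho)+L_o\delta \le \epsilon$. The count of grid points inside $\Theta_0$ factors as a product over the coordinate blocks, with one key combinatorial step: the number of weakly decreasing sequences $b_1 \ge \cdots \ge b_{d_1}$ on a grid of $N = 2B\rho/\delta$ values equals $\binom{N+d_1-1}{d_1} \le N^{d_1}/d_1!$, which supplies the factorial saving. Collecting the blocks yields a covering number bound of the form $(2B/\delta)^{\mathcal{S}}\rho^{\mathcal{S}_h}/d_1!$; substituting $\delta \asymp \epsilon/(B(B_x+1)\sqrt{d_0}\,d_1)$ then recovers the stated expression $(16B^2(B_x+1)\sqrt{d_0}\,d_1/\epsilon)^{\mathcal{S}}\rho^{\mathcal{S}_h}/d_1!$.

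The principal obstacle I expect is preserving the factor $1/d_1!$ cleanly through the covering. A continuous volume bound on $\mathrm{Vol}(\Theta_0 + \mathrm{cube})$ would leak part of the saving through boundary effects near the hyperplanes $b^{(1)}_i=b^{(1)}_{i+1}$, whereas the discrete monotone-sequence count delivers the factorial exactly via a binomial identity, avoiding any such loss. Once that count is in place, the remainder of the argument is routine bookkeeping of the two Lipschitz constants $L_h,L_o$ and the product structure of the anisotropic parameter grid.
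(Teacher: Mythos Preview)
Your approach is essentially the paper's: restrict to $\Theta_0$ via permutation invariance, establish a parameter-to-function Lipschitz bound, and transfer an $\ell_\infty$-covering of $\Theta_0$ to a sup-norm covering of $\mathcal{F}$. The paper organizes the Lipschitz estimate layer by layer (first covering $\mathcal{H}=\{\sigma_1\circ\mathcal{A}_1\}$ at scale $\epsilon_1$, then composing with the output affine map covered at scale $\epsilon_2$) rather than via your single anisotropic bound, but this is bookkeeping; your choice $\delta/\rho$ versus $\delta$ is exactly the paper's choice $\epsilon_1=\epsilon_2/\rho$. The only substantive difference is the mechanism for the factor $1/d_1!$: the paper invokes a volume/packing lemma with $\mathrm{Vol}(\Theta_0^{(1)})=(2B)^{\mathcal{S}_h}/d_1!$, while you count monotone grid sequences directly.

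One correction, however: your inequality $\binom{N+d_1-1}{d_1}\le N^{d_1}/d_1!$ is false for $d_1\ge 2$, since $N(N+1)\cdots(N+d_1-1)>N^{d_1}$. The valid bound is $\binom{N+d_1-1}{d_1}\le (N+d_1-1)^{d_1}/d_1!$, and the extra factor $(1+(d_1-1)/N)^{d_1}$ is precisely the ``boundary leakage'' you sought to avoid by rejecting the continuous volume argument. So the discrete count does not actually purchase a cleaner $1/d_1!$ than the paper's volume bound: both carry the same lower-order correction, which is harmlessly absorbed into the constant once $N\ge d_1$ (equivalently once $\epsilon$ is small enough for the stated bound to be nontrivial). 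With that fix your argument goes through and yields the same result as the paper.
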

		Our upper bound on the covering number firstly takes advantage of permutation invariance, resulting in a reduced complexity (by a factorial term $d_1!$ in the denominator) compared to existing studies \citep{neyshabur2015norm,bartlett2017spectrally,neyshabur2017pac,neyshabur2017implicit,lin2019generalization}. The factorial reduction $d_!!$ can be significant. For instance, for a shallow ReLU network with a hidden dimension of $d_1=128$, the factorial $128!\approx10^{215}$, which is far larger than $10^{82}$, the upper estimate on the number of atoms in the known universe. This reduction can be substantial and can enhance theoretical analysis and results that rely on covering numbers. In addition, it's worth noting that bounds in Theorem \ref{cv_shallow} holds true for networks with bias vectors. This is also an improvement over existing studies that didn't consider bias vectors in neural networks, since bias terms are crucial for the approximation power of neural networks \citep{yarotsky2017error,lu2021deep,shen2022approximation}. 
		
		Stirling's formula can be used to approximate the factorial term as $\sqrt{2\pi d_1}(d_1/e)^{d_1}\exp(1/(12d_1+1))<d_1!<\sqrt{2\pi d_1}(d_1/e)^{d_1}\exp(1/12d_1)$ when $d_1\ge1$. This reduces the covering number approximately by a factor of $(d_1/e)^{d_1}$ and the bound in (\ref{cv_shallow1}) is $(C_{B,B_x,d_0,\rho}/\epsilon)^\mathcal{S}\times d_1^{\mathcal{S}-\mathcal{U}}$, where $\mathcal{U}=d_1$ denotes the number of hidden neurons and  $C_{B,B_x,d_0,\rho}>0$ is a constant depending only on $B,B_x,d_0$ and $\rho$.  Notably,  $\mathcal{S}-\mathcal{U}$ is the number of weights (excluding bias), and the reduced bound is basically that for a single neural network without bias vectors and considering permutation invariance.
		Lastly, we note that increasing the number of neurons in a shallow neural network enlarges its approximation power \citep{lu2017expressive,ongie2019function}, but at a smaller increase in complexity according to our results.
		
		\begin{remark} \label{remark4}
			Theorem \ref{cv_shallow} applies to any activation function that is locally Lipschitz on bounded sets (range of the hidden layer), which does not require any specific choice such as Hinge or ReLU in \cite{neyshabur2015norm,neyshabur2017pac}, and does not require universal Lipschitz and $\sigma(0)=0$ as in \cite{bartlett2017spectrally,lin2019generalization}. In the case of the ReLU or Leaky ReLU activation, our bound simplifies to $\rho=1$ without any condition, leading to the disappearance of the term $\rho^{\mathcal{S}_h}$ in our bound.
		\end{remark}

		\subsection{Deep Feed-Forward Neural Networks}\label{sec_cvdeep}
		For deep neural networks, we can also analyze its effective parameter space based on permutation invariance properties. 
		By Theorem \ref{thm_perm}, a set of representatives $\Theta_0=\Theta_0^{(1)}\times \Theta_0^{(2)}\times\cdots\times\Theta_0^{(L)}\times\Theta_0^{(L+1)}$ can be constructed where

			\begin{align*}
				&\Theta_0^{(l)} = \Big\{(W^{(l)},b^{(l)})\in[-B,B]^{\mathcal{S}_l}: b^{(l)}_1\ge b^{(l)}_2\ge\cdots\ge b^{(l)}_{d_l} \}\\
				&\qquad\qquad{\rm \ for\ }l=1,\ldots,L, \Theta_0^{(L+1)} =\{(W^{(L+1)},b^{(L+1)})\in[-B,B]^{\mathcal{S}_{L+1}}\Big\}.
			\end{align*}
			Then we can obtain an upper bound of the covering number of deep feedforward neural networks.	
		
		\begin{theorem}[Covering number of deep neural networks]\label{cv_deep}
			Consider the class of deep neural networks $\mathcal{F}:=\mathcal{F}(1,d_0,d_1,\ldots,d_L,B)$ defined in (\ref{dnns}) parameterized by $\theta\in\Theta=[-B,B]^\mathcal{S}$. Suppose the radius of the domain $\mathcal{X}$ of $f\in\mathcal{F}$ is bounded by $B_x$ for some $B_x>0$, and the activations $\sigma_1,\ldots,\sigma_L$ are locally Lipschitz. Then for any $\epsilon>0$, the covering number $\mathcal{N}(\mathcal{F},\epsilon,\Vert\cdot\Vert_\infty)$ is bounded by
				\begin{align*}\label{cv_deep1}
					\frac{\Big(4(L+1)(B_x+1)(2B)^{L+2}(\Pi_{j=1}^L \rho_j) ({\Pi_{j=0}^Ld_{j}})\cdot\epsilon^{-1}\Big)^{\mathcal{S}}}{d_1!\times d_2!\times\cdots\times d_L!},
			\end{align*}
			where $\mathcal{S}=\sum_{i=0}^{L}d_id_{i+1}+d_{i+1}$ and $\rho_i$ denotes the Lipschitz constant of $\sigma_i$ on the range of $(i-1)$-th hidden layer, especially the range of $(i-1)$-th hidden layer is bounded by $[-B^{(i)},B^{(i)}]$ with $B^{(i)}\le (2B)^i\Pi_{j=1}^{i-1}\rho_j{d_j}$ for $i=1,\ldots,L$.
		\end{theorem}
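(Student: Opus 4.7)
The plan is to combine a layerwise Lipschitz estimate for the parameter-to-function map $\theta \mapsto f(\cdot;\theta)$ with a sharp bound on the covering number of the representative region $\Theta_0 = \Theta_0^{(1)} \times \cdots \times \Theta_0^{(L+1)}$. By Theorem \ref{thm_perm} the image $\{f(\cdot;\theta): \theta \in \Theta_0\}$ already equals all of $\mathcal{F}$, so it suffices to $\ell_\infty$-cover $\Theta_0$ in parameter space and push the cover forward via the Lipschitz map. As a warm-up, a short induction on $i$ using $\|W^{(l)}h^{(l-1)} + b^{(l)}\|_\infty \leq d_{l-1}B \cdot B^{(l-1)} + B$ together with the local Lipschitz constant $\rho_l$ of $\sigma_l$ on the relevant interval establishes $\|h^{(i)}(x;\theta)\|_\infty \leq B^{(i)} \leq (2B)^i \prod_{j=1}^{i-1} \rho_j d_j$, giving the hidden-layer range stated in the theorem.

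Next I would quantify the sensitivity of $f(\cdot;\theta)$ to perturbations of $\theta$. Perturbing one parameter in layer $l$ by $\Delta$ shifts the pre-activation at layer $l$ by at most $(B^{(l-1)}+1)\|\Delta\|_\infty$, and this shift is then amplified through layers $l+1,\ldots,L+1$ by per-layer factors of order $\rho_j d_{j-1} B$ (induced $\ell_\infty$ norms of weight matrices times activation Lipschitz constants). Aggregating this telescoping product over all $\mathcal{S}$ coordinates and all $L+1$ possible starting layers yields a global estimate $\|f(\cdot;\theta_1)-f(\cdot;\theta_2)\|_\infty \leq L_f \|\theta_1-\theta_2\|_\infty$ with $L_f$ of the form $(L+1)(B_x+1)(2B)^{L+1}(\prod_{j=1}^L \rho_j)(\prod_{j=0}^L d_j)$ up to an absolute constant, matching the coefficient appearing inside the bracket of (\ref{cv_deep1}).

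Finally, I would cover $\Theta_0$. For each hidden layer $l \leq L$, $\Theta_0^{(l)}$ is the slice of the layer's parameter cube carved out by requiring the bias entries to be weakly decreasing. Placing a uniform axis-aligned grid on $[-B,B]^\mathcal{S}$ with spacing $2r$ produces at most $(B/r+1)^\mathcal{S}$ lattice points, and the number lying in $\Theta_0$ is bounded by the stars-and-bars count
$$\Bigl(\tfrac{B}{r}+1\Bigr)^{\mathcal{S}-(d_1+\cdots+d_L)} \prod_{l=1}^{L} \binom{B/r+d_l}{d_l} \ \leq\ \frac{(2B/r)^\mathcal{S}}{d_1!\cdots d_L!},$$
after absorbing small constants. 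These lattice points form an $r$-cover of $\Theta_0$ in $\ell_\infty$, so taking $r = \epsilon/L_f$ and applying the Lipschitz map yields an $\epsilon$-cover of $\mathcal{F}$ in $\|\cdot\|_\infty$ of size at most $(2BL_f/\epsilon)^\mathcal{S}/(d_1!\cdots d_L!)$; substituting the expression for $L_f$ and absorbing a final constant produces (\ref{cv_deep1}).

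I expect the layerwise Lipschitz analysis to be the main obstacle: telescoping through $L+1$ compositions without losing factors, and ensuring the correct exponents of $2B$, $\rho_j$, and $d_j$ in the final coefficient, is error-prone. By contrast, the factorial improvement $1/(d_1!\cdots d_L!)$ is a clean consequence of the sorted-bias count applied layer by layer, and relies crucially on the product structure $\Theta_0 = \prod_l \Theta_0^{(l)}$ afforded by Theorem \ref{thm_perm}.
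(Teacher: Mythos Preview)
Your proposal is correct and reaches the same bound, but the route differs from the paper's. The paper works in \emph{function space} layer by layer: it builds an $e_l$-cover of $\mathcal{H}_l=\{\sigma_l\circ\mathcal{A}_l\circ\cdots\circ\sigma_1\circ\mathcal{A}_1\}$ by combining an $e_{l-1}$-cover of $\mathcal{H}_{l-1}$ with an $\epsilon_l$-cover of $\Theta_0^{(l)}$, obtains the recursion $\mathcal{N}(\mathcal{H}_{l+1},e_{l+1})\le \mathcal{N}(\mathcal{H}_l,e_l)\cdot\mathcal{N}(\Theta_0^{(l+1)},\epsilon_{l+1})$, and only at the end multiplies out the layerwise volumes ${\rm Volume}(\Theta_0^{(l)})=(2B)^{\mathcal{S}_l}/d_l!$ via the volume/packing Lemma~\ref{cv_vol}. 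You instead establish a single global Lipschitz constant $L_f$ for $\theta\mapsto f(\cdot;\theta)$ and push forward one $\ell_\infty$-grid cover of all of $\Theta_0$ at once, extracting the factorials by a stars-and-bars count of weakly-decreasing bias tuples on the grid. Your approach is more elementary (no function-space induction, and the combinatorial count replaces the volume lemma), and the grid points in the sorted simplex genuinely form a cover because coordinatewise rounding preserves weak monotonicity. The paper's inductive approach, on the other hand, keeps per-layer radii $\epsilon_l$ free until the end and works in $\ell_2$ rather than $\ell_\infty$ on the hidden activations, which is why its intermediate constants carry $\sqrt{d_{l-1}d_l}$ factors; after the final choice of $\epsilon_l$ the two arguments collapse to the same coefficient. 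One small caution: your phrase ``aggregating over all $\mathcal{S}$ coordinates'' overstates things---under an $\ell_\infty$ parameter perturbation you only sum over the $L{+}1$ starting layers, not over individual coordinates---but your stated form of $L_f$ is the correct outcome.
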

		
		Theorem \ref{cv_deep} provides a novel upper bound for the covering number of deep neural networks based on permutation invariance, which reduces the complexity compared to previous results \citep{neyshabur2015norm,neyshabur2017pac,bartlett2017spectrally,lin2019generalization} by approximately a factor of $(d_1!d_2!\cdots d_L!)$. According to Theorem \ref{cv_deep}, increasing the depth of a neural network increases its complexity. However, it is interesting to note that the increased hidden layer $l$ will have a $(d_l!)$ discount on the complexity. If the hidden layers have equal width ($d=d_1=\cdots=d_L$), the bound reduces to $(C_{B,B_x,d_0,\rho}/\epsilon)^\mathcal{S}\times d^{\mathcal{S}-\mathcal{U}}$, where $\mathcal{U}=Ld$ denotes the number of hidden neurons and $C_{B,B_x,d_0,\rho}>0$ is a constant depending only on $B,B_x,d_0$ and $\rho_i,i=1,\ldots,L$. As with shallow neural networks, the improved rate $(\mathcal{S}-\mathcal{U})$ denotes the number of weights but excluding biases, which assures the approximation power of neural networks, but grows its complexity in a rate free of number of bias.
		
		\begin{remark}
			As discussed in Remark \ref{remark4}, our results take into account the permutation invariance and have looser requirements on activation functions compared to existing results \citep{neyshabur2015norm,neyshabur2017pac,bartlett2017spectrally,lin2019generalization}. In addition, our upper bound is explicitly expressed in parameters which are known and can be specified in practice, e.g., network depth $L$, width $(d_0,d_1,\ldots,d_L)$, size $\mathcal{S}$ and uniform bound $B$ for weights and biases. While most existing bounds in \citep{neyshabur2015norm,neyshabur2017pac,bartlett2017spectrally,lin2019generalization} are in terms of the spectral norm of weight matrices and some measurement on externally introduced reference matrices, which are usually unknown in practice.
		\end{remark}

		\subsection{Comparing to existing results}
		
		The complexity upper bounds in terms of covering number on the class of deep neural networks have been studied in \cite{anthony1999neural,neyshabur2017pac,bartlett2017spectrally,lin2019generalization}.
		These results are proved by using similar approaches of mathematical induction (e.g. Lemma A.7 in \cite{bartlett2017spectrally}, Lemma 2 in \cite{neyshabur2017pac} and Lemma 14 of \cite{lin2019generalization}). Compared with these results, we improve upon their results in three ways.
		\begin{itemize}
			\item  First, we consider the generally defined neural networks where the bias vector is allowed to appear.  Bias terms are indispensable for the approximation power of neural networks \citep{yarotsky2017error,lu2021deep,shen2022approximation}. Neural networks without bias vectors may be disqualified in theory and practice.
			\item  Second, we explicitly express our bound in terms of the width, depth, and size of the neural network, as well as the infinity norm of the parameters.
			\item Third, we utilize the permutation equivalence property to obtain a tighter upper bound on the covering number. Notably, this bound improved existing results by a factorial of the network widths, providing unprecedented insights into the intricate relationship between network complexity and layer width.
		\end{itemize}

		Various complexity upper bounds for deep neural networks in other measurements have also been studied, including Rademacher complexity \citep{neyshabur2015norm,golowich2018size,li2018tighter}, VC-dimension, and Pseudo dimension \citep{baum1988size,goldberg1993bounding,anthony1999neural,bartlett2019nearly}. Our bounds in terms of covering number are not directly comparable with these measurements. To make a comparison with these other measurements, we convert the upper bounds into metric entropy, which is equivalent to the logarithm of the covering number $\log(\mathcal{N}(\mathcal{F},\epsilon,\Vert\cdot\Vert_\infty))$. Specially, let $\bar{\rho}=\Pi_{j=1}^L\rho_j$ products of Lipschitz constants of activation functions and $\bar{s}=\Pi_{j=1}^Ls_j$ products of spectral norms of hidden layer weight matrices \cite{bartlett2017spectrally} derived an spectral-norm based bound of metric entropy $B_x^2(\bar{\rho}\bar{s})^2\mathcal{U}\log(W)/\epsilon^2$, following which \cite{neyshabur2017pac} obtained $B_x^2(\bar{\rho}\bar{s})^2\mathcal{S}L^2\log(WL)/\epsilon^2$ and \cite{lin2019generalization} obtained $B_x(\bar{\rho}\bar{s})\mathcal{S}^2L/\epsilon$. Based on Theorem 12.2 in \cite{anthony1999neural}, the Pseudo dimension bound in \cite{bartlett2017spectrally} leads to $L\mathcal{S}\log(\mathcal{S})\log(\bar{\rho}\bar{s}B_x/\epsilon)$. Lastly, our results in Theorem \ref{cv_deep} can be presented as $L\mathcal{S}\log(\bar{\rho}\bar{s}B_x^{1/L}/((d_1!\cdots d_L!)^{\mathcal{S}}\epsilon)^{1/L})$ by letting $s_i:=B\sqrt{d_id_{i-1}}$ given in our setting. It is important to note that the quantity $B\sqrt{d_id_{i-1}}$ can provide an upper bound for the spectral norm $s_i$, but the reverse is not necessarily true. We cannot ensure Theorem \ref{cv_deep} to still be true by directly substituting $B\sqrt{d_id_{i-1}}$ with $s_i$ in the theorem. In other words, if the covering number bounds are derived in terms of the spectral norm, the reduction factor $d_1!\cdots d_L!$ may not be obtained, and additional parameters like matrix norm bounds would appear in the upper bound\citep{bartlett2017spectrally}. We would also mention that even our results improved over the existing ones, all these bounds scale with number of parameters $\mathcal{S}$ and can still result vacuous bounds in error analysis with extremely over-parametrized settings. We present a detailed comparison of the results in Table \ref{tab:compare}.

	\section{Extension to other neural networks}\label{sec_ext}
	Functional equivalence can manifest ubiquitously across various types of neural networks, with a specific emphasis on the presence of permutation equivalence within neural networks featuring linear transformation layers. This section delves into the exploration of feasible extensions aimed at harnessing the power of functional equivalence within convolutional neural networks, residual networks, and attention-based networks.
	\subsection{Convolutional neural networks}
	Convolutional neural networks (CNNs) are featured by the utilization of convolution and pooling layers. In a convolution layer, the input is convolved with parameter-carrying filters, resembling a linear layer with a sparse weight matrix. A pooling layer is commonly employed for downsampling and summarizing input feature information. It partitions the input into non-overlapping regions and applies a pooling operation to each region. The most prevalent types of pooling operations include max/min/avg pooling, which retain the maximum, minimum, or average value within each region.
	
	As demonstrated in Example \ref{example_scaling}-\ref{example_perm}, scaling, sign flip, and permutation equivalence directly apply to convolution layers (linear layer with sparse weight matrix). We also extend the permutation equivalence within pooling regions as follows.
	
	\begin{example}[Permutation within Pooling Regions]\label{example_pool}
		Consider two shallow CNNs defined by $f(x;\theta_1)={Pool}(W_1x+b_1)$ and $f(x;\theta_2)={Pool}(W_2x+b_2)$ respectively where ``$Pool$" is a pooling operator. Let $\mathcal{I}_1,\ldots,\mathcal{I}_K$ be the non-overlapping index sets (correspond to the pooling operator) of rows of  $W_1x+b_1$ and $W_2x+b_2$. Then $f(\cdot;\theta_1)$ and $f(\cdot;\theta_2)$ are functional equivalent if there exists a permutation matrix $P$ such that $\forall k\in\{1,\ldots,K\}$
		$$(P W_2)_{\mathcal{I}_k} \cong (W_1)_{\mathcal{I}_k}\  {\rm and}\  (P b_2)_{\mathcal{I}_k} \cong (b_1)_{\mathcal{I}_k},$$
		where $A_{\mathcal{I}_k}$ denotes the $\mathcal{I}_k$ rows of $A$ and $A\cong B$ denotes that $A$ equals to $B$ up to row permutations. 
	\end{example}
	
	Permutation equivalence within non-overlapping regions in CNNs preserves max/min/avg values, eliminating the need for cancel-off operations in subsequent layers. This further allows for the derivation of complexity bounds of CNNs by identifying their effective parameter space, similar to Theorem \ref{cv_deep}. In addition, the results on the connections between CNNs and feed-forward networks can be naturally utilized to improve the covering number bounds and capacity estimates \citep{fang2020theory,mao2021theory},  e.g., \citep{zhou2020universality,zhou2020theory} showed that fully connected deep ReLU networks can be realized by deep CNNs with the same order of network parameter numbers,.
	
	\subsection{Residual Networks}
	
	Residual Networks are a type of deep CNN architecture that has a significant impact on computer visions \citep{he2016deep}. The key feature of ResNet is the use of skip connections that enable networks to learn residual mappings and bypass layers, leading to very deep but trainable networks.  Mathematically, a residual layer $f(x;\theta) = x + \text{F}(x;\theta)$ outputs the summation of the input $x$ and its transformation $\text{F}(x;\theta)$. The $F(\cdot;\theta)$ can be any transformation that maps $x$ to the space of itself, and it defines the residual layer. Then the equivalence of $F$ implies that of the residual layer.
	\begin{example}[Equivalence of Residual Layer]\label{pattern5}
		Consider two residual layers $f(x;\theta_1)=x+F(x;\theta_1)$ and $f(x;\theta_2)=x+F(x;\theta_2)$. Then $f(\cdot;\theta_1)$ and $f(\cdot;\theta_2)$ are functionally equivalent if and only if $F(\cdot;\theta_1)$ and $F(\cdot;\theta_2)$ are functionally equivalent.
	\end{example}
	
	
	\subsection{Attention-based Networks}
	
	Attention-based models, well known for BERT, GPT and many others, have been successful in natural language processing and computer vision tasks \citep{vaswani2017attention,devlin2018bert,radford2018improving}. It utilizes an attention mechanism to focus on relevant parts of the input data. Here we focus on the self-attention module due to its effectiveness. Let $X_{n\times d}$ denote the input of a $n$-sequence of $d$-dimensional embeddings, and let $W^Q_{d\times d_q},W^K_{d\times d_k}$ and $W^V_{d\times d_v}$ be the weight matrices where $d_q=d_k$. Then the self-attention map outputs  $$Softmax\left(\frac{XW^Q(W^K)'X'}{\sqrt{d_k}}\right)XW^V$$
	where the $Softmax(\cdot)$ is applied to each row of its input and $A^\prime$ denotes the transpose of a matrix $A$.
	\begin{example}[Permutation within Attention map]\label{pattern6}
		Consider two attention maps $f(x;\theta_1)$ and $f(x;\theta_2)$ with $f(x;\theta_i)=Softmax({XW_i^Q(W_i^K)'X'}/{\sqrt{d_k}})XW_i^V$ for $i=1,2$. Then $f(\cdot;\theta_1)$ and $f(\cdot;\theta_2)$ are functionally equivalent if there exists $d_k\times d_k$ permutation matrix $P$ such that
		$$W_2^QP=W^Q_1\quad {\rm and}\quad W_2^KP=W^K_1.$$
	\end{example}
	For the attention module, there is no activation function between the key and query matrices. The relevant symmetry can be considered for any equivalent linear maps $W^Q(W^K)'$.  In addition, the output of $Softmax$ operator is invariant to the row shift of its input, which also leaves the possibility to further reduce the complexity of attention modules to understand the overparameterization of large language models.

		\section{Implications to generalization and optimization} \label{sec_app}
		In this section, we introduce the relevance and highlight the usefulness of our study to both generalization and optimization via empirical risk minimization (ERM) framework. 
		
		The goal of ERM is to find the target function $f_0$, which represents the true relationship between the inputs and outputs, and is typically defined as the minimizer (can be unbounded) of some risk $\mathcal{R}(\cdot)$, i.e., 
		$f_0:=\arg\min_{f}\mathcal{R}(f).$ However, since the target function is unknown, we can only approximate it using a predefined hypothesis space $\mathcal{F}$, such as the class of neural networks parameterized by $\theta$ in deep learning, i.e., $\mathcal{F}(\Theta)=\{f_\theta(\cdot)=f(\cdot;\theta):\theta\in\Theta\}$. Then the ``best in class" estimator is defined by 
		$f_{\theta^*}=\arg\min_{f\in\mathcal{F}_\Theta}\mathcal{R}(f).$
		It's worth noting that the risk function $\mathcal{R}$ is defined with respect to the distribution of the data, which is unknown in practice. Instead, only a sample with size $n$ is available, and the empirical risk $\mathcal{R}_n$ can be defined and minimized to obtain an empirical risk minimizer (ERM), i.e.,
		$f_{\theta_n}\in\arg\min_{f\in\mathcal{F}_\Theta}\mathcal{R}_n(f).$
		Finally, optimization algorithms such as SGD and ADAM, lead us to the estimator obtained in practice, i.e.,  $f_{\hat{\theta}_{n}}$.
		The generalization error of $f_{\hat{\theta}_n}$ can be defined and decomposed as \citep{mohri2018foundations}:
			\begin{align*}
				&\underbrace{\mathcal{R}(f_{\hat{\theta}_{n,opt}})-\mathcal{R}(f_0)}_{\rm generalization\  error}= \underbrace{\mathcal{R}(f_{\hat{\theta}_{n}})-\mathcal{R}(f_{\theta_n})}_{\rm optimization\  error}+\underbrace{\mathcal{R}(f_{\theta_n})-\mathcal{R}(f_{\theta^*})}_{\rm estimation\  error}+\underbrace{\mathcal{R}(f_{\theta^*})-\mathcal{R}(f_{0})}_{\rm approximation\  error}.
			\end{align*}

		The estimation error is closely related to the complexity of the function class $\mathcal{F}(\Theta)$ and the sample size $n$. pecifically, for a wide range of problems such as regression and classification, the estimation error is $\mathcal{O}((\log\{\mathcal{N}(\mathcal{F}(\Theta),1/n,\Vert\cdot\Vert_\infty)\}/n)^{k})$ for $k=1/2$ or 1 \citep{bartlett2019nearly,kohler2021rate,shen2022approximation,jiao2023deep}. Our results improve the estimation error by subtracting at least $\log(d_1!\cdots d_L!)$ from the numerator $(\cdot/n)^{k}$ compared to existing results.
		
		The approximation error depends on the expressive power of networks $\mathcal{F}(\Theta)$ and the features of the target $f_0$, such as its input dimension $d$ and smoothness $\beta$. Typical results for the bounds of the approximation error are $\mathcal{O}((L\mathcal{W})^{-\beta/d})$ \citep{yarotsky2017error,yarotsky2018optimal,petersen2018optimal,lu2021deep} where $L$ and $\mathcal{W}$ denote the depth and width of the neural network. However, it is unclear how our reduced covering number bounds will improve the approximation error based on current theories.

		Regarding the optimization error,  due to the high non-convexity and complexity of deep learning problems, quantitative analysis based on current theories is limited. Even proving convergence (to stationary points) of existing methods is a difficult task \citep{sun2020optimization}. However, we found that the symmetric structure of the parameter space can facilitate optimization.  To be specific, our Theorem \ref{opt} in the following indicates that considering the symmetry structure of the deep network parameter space can make the probability of achieving zero (or some level of) optimization error $(d_1!\cdots d_L!)$ times larger.

		For a deep neural network in (\ref{dnns}), we say two rows in the parameters $\theta^{(l)}:=(W^{(l)};b^{(l)})$ in the $l$th hidden layer are identical if the two rows of the concatenated matrix $(W^{(l)};b^{(l)})$ are identical. Here we concatenate the weight matrix $W^{(l)}$ and bias vector $b^{(l)}$ by $(W^{(l)};b^{(l)})$ due to the one-one correspondence of the rows in $W^{(l)}$ and $b^{(l)}$. Specifically, if the $l$th layer of the network is activated by $\sigma$, then the $i$th row of the output vector $\sigma(W^{(l)}x+b^{(l)})$ is given by $\sigma(W^{(l)}_ix+b^{(l)}_i)$ where $W^{(l)}_i$ and $b^{(l)}_i$ denote the $i$th row of $W^{(l)}$ and $b^{(l)}$ respectively.
		We let $d_l^*$ denote the number of distinct permutations of rows in $\theta^{(l)}$, and let $(d_1^*,\ldots,d_L^*)$ collect the number of distinct permutations in the hidden layers of the network parameterization $\theta=(\theta^{(1)},\ldots,\theta^{(L)})$. We let $\Delta_{\rm min}(\theta)$ and $\Delta_{\rm max}(\theta)$ denote the minimum and maximum of the $L_\infty$ norm of distinct rows in $\theta^{(l)}$ over $l\in{1,\ldots,L}$ (see Definition (\ref{delta_min}) and (\ref{delta_max}) in Appendix for details). Then we have the following result.
		
		\begin{theorem}\label{opt}
			Suppose we have an ERM $f_{\theta_{n}}(\cdot)=f(\cdot;\theta_{n})$ with parameter $\theta_{n}$ having $(d_1^*,\ldots,d_L^*)$ distinct permutations and $\Delta_{\rm min}(\theta_{n})=\delta$. For any optimization algorithm $\mathcal{A}$, if it guarantees producing a convergent solution of $\theta_{n}$ when its initialization $\theta^{(0)}_{n}$ satisfies $\Delta_{\rm max}(\theta^{(0)}_{n}-\theta_{n})\le\delta/2$, then any initialization scheme that uses identical random distributions for the entries of weights and biases within a layer will produce a convergent solution with probability at least $d_1^*\times\cdots \times d_L^*\times \mathbb{P}(\Delta_{\rm max}(\theta^{(0)}-\theta_n)\le \delta/2)$. Here, $\theta^{(0)}$ denotes the random initialization, and $\mathbb{P}(\cdot)$ is with respect to the randomness from initialization.
		\end{theorem}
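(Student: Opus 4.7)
The plan is to exploit the permutation equivalence (Theorem~\ref{thm_perm}) to show that $\theta_n$ has an orbit of $M := d_1^* \times \cdots \times d_L^*$ distinct but functionally equivalent parameterizations, and that each of these parameterizations has its own ``basin'' of favorable initializations. First I would enumerate, for each hidden layer $l$, the $d_l^*$ distinct row permutations of $\theta_n^{(l)}=(W^{(l)};b^{(l)})$; by construction $d_l^*$ counts only the permutations that actually change the matrix (repeated rows are quotiented out). Applying Theorem~\ref{thm_perm} layer-by-layer with the corresponding permutation matrices (and absorbing the transposed permutation into the next weight matrix) produces $M$ parameter vectors $\theta_n^{(1)},\ldots,\theta_n^{(M)}$ all satisfying $f(\cdot;\theta_n^{(i)}) = f(\cdot;\theta_n)$.

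Next I would define, for each $i \in \{1,\ldots,M\}$, the event $E_i := \{\|\theta^{(0)} - \theta_n^{(i)}\|_\infty \le \delta/2\}$. By the convergence hypothesis on the algorithm $\mathcal{A}$ and the fact that each $\theta_n^{(i)}$ is itself an empirical risk minimizer (same function value, hence same empirical risk), occurrence of any $E_i$ guarantees convergence to a functionally equivalent minimizer. Thus the success probability is at least $\mathbb{P}(\bigcup_{i=1}^M E_i)$.

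The crucial step is showing $E_1,\ldots,E_M$ are pairwise disjoint so that the union becomes a sum. Here I would use the minimum separation $\Delta_{\min}(\theta_n) = \delta$: if $\theta_n^{(i)} \ne \theta_n^{(j)}$, they differ on at least one layer $l$ by a nontrivial row permutation, which means some pair of distinct rows of $\theta_n^{(l)}$ has been swapped. Since those rows differ by at least $\delta$ in $L_\infty$ norm, we get $\|\theta_n^{(i)} - \theta_n^{(j)}\|_\infty \ge \delta$, and the triangle inequality rules out simultaneous membership in two $\delta/2$-balls.

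Finally, I would use the layer-wise i.i.d.\ initialization assumption: the joint distribution of the entries of $\theta^{(0)}$ within each layer is invariant under row permutations of $(W^{(l)};b^{(l)})$, so $\mathbb{P}(E_i) = \mathbb{P}(E_1) = \mathbb{P}(\|\theta^{(0)} - \theta_n\|_\infty \le \delta/2)$ for every $i$. Summing the $M$ equal, disjoint probabilities yields the claimed bound. The main obstacle I anticipate is the bookkeeping around Theorem~\ref{thm_perm}: one must verify that permuting rows of $(W^{(l)};b^{(l)})$ by $P_l$ and correspondingly permuting the columns of $W^{(l+1)}$ by $P_l^\top$ keeps the resulting parameter inside $\Theta = [-B,B]^{\mathcal{S}}$ (trivial, since permutation preserves entries) and that the count $d_l^*$ correctly matches the number of genuinely distinct layer-$l$ permutations, so that no double counting inflates $M$ beyond the size of the true orbit.
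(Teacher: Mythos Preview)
Your proposal is correct and follows essentially the same three-step structure as the paper's own proof: construct the orbit of $M=d_1^*\cdots d_L^*$ permutation-equivalent minimizers via Theorem~\ref{thm_perm}, use $\Delta_{\min}(\theta_n)=\delta$ together with the triangle inequality to show the $\delta/2$-balls around them are pairwise disjoint, and invoke the within-layer exchangeability of the initialization to make all $\mathbb{P}(E_i)$ equal before summing. One small refinement: when you argue that $E_i$ implies convergence, the relevant fact is not merely that $\theta_n^{(i)}$ is an ERM but that the entire loss surface on $B_\infty(\theta_n^{(i)},\delta/2)$ is a permuted copy of that on $B_\infty(\theta_n,\delta/2)$, so $\mathcal{A}$'s guaranteed behavior near $\theta_n$ transfers by symmetry---the paper makes this landscape-symmetry point explicit.
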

		Theorem \ref{opt} can be understood straightforwardly. By Theorem \ref{thm_perm}, if $f_{\theta_{n}}$ parameterized by $\theta_{n}$ is a solution, then $f_{\tilde{\theta}_{n}}$ parameterized by $\tilde{\theta}_{n}$ (the permuted $\theta_n$) is also a solution. The conditions in Theorem \ref{thm_perm} 
		ensure that the convergent regions for permutation-implemented solutions are disjoint. Thus, the probability fro convergence can be multiplied by the number of distinct permutations. These conditions can hold true under specific scenarios. For instance, when the loss is locally (strongly) convex within the $(\delta/2)$-neighborhood under $\Delta_{\rm max}$ of a global solution $\theta_n$, then (stochastic) gradient descent algorithms $\mathcal{A}$ can guarantee convergence to the solution if the initialization $\theta^{(0)}_n$ falls within its $(\delta/2)$-neighborhood. It is also worth noting that when the parameter space is $\Theta={[-B,B]^\mathcal{S}}$, the optimization problem is equivalent when restricted to the effective parameter space $\Theta_0$, as defined in section \ref{sec_cvdeep}. The volume of $\Theta_0$ is $(2B)^\mathcal{S}/(d_1!\cdots d_L!)$. Specifically, when $B$ is fixed, $(2B)^\mathcal{S}/(d_1!\cdots d_L!)$ approaches zero when $d_l\to \infty$ for any $l=1,\ldots,L$. Remarkably, increasing the width of neural networks leads to the effective parameter space's volume tending towards zero. As a result, this may explain the observations in \cite{frankle2018the,allen2019convergence,du2019gradient} where overparameterized networks tend to be easier to train. In \cite{simsek2021geometry}, the geometry (in terms of manifold and connected affine subspace) of sets of minima and critical points in deep learning was described, which also indicates that overparameterized networks bear more minima solutions, thereby facilitating optimization. 

		The landscape of the loss surface in deep learning has been studied by considering the symmetry of the parameter space in several works. 
		Specifically, \cite{brea2019weight} discovered  that permutation critical points are embedded in high-dimensional flat plateaus and proved that all permutation points in a given layer are connected with equal-loss paths. \cite{entezari2021role} conjectured that SGD solutions will likely have no barrier in the linear interpolation between them if the permutation invariance of neural networks is taken into account. \cite{ainsworth2022git} further explored the role of permutation symmetries in the linear mode connectivity of SGD solutions, and argued that neural network loss landscapes often contain (nearly) a single basin after accounting for all possible permutation symmetries of hidden units. Subsequently, \cite{jordan2022repair} proposed methods to mitigate the variance collapse phenomenon that occurs in the interpolated networks, and to improve their empirical performance.  Additionally, optimization algorithms for deep learning have been proposed to enhance training based on the symmetry of network parameter space \citep{badrinarayanan2015understanding, cho2017riemannian, meng2018g, pmlr-v202-navon23a}.

		\begin{remark}
			The popular initialization schemes, including the Xavier and He's methods, use normal random numbers to initialize the entries of weight matrices and bias vectors identically within a layer \citep{glorot2010understanding,he2015delving,shang2016understanding,reddi2019convergence}. By Theorem \ref{opt}, these initializations reduce the optimization difficulty due to the permutation invariance property.
		\end{remark}

		\section{Conclusion}\label{sec_conclusion}
		In this work, we quantitatively characterized the redundancy in the parameterization of deep neural networks based on functional equivalence, and derived a tighter complexity upper bound of the covering number, which is explicit and holds for networks with bias vectors and general activations. We also explored functional equivalence in convolutional, residual and attention-based networks. We discussed the implications for understanding generalization and optimization. Specifically, we found that permutation equivalence can indicate a reduced theoretical complexity of both estimation and optimizations in deep learning.
		
		A limitation of our work is that we only considered permutation invariance, neglecting sign flip and scaling invariance, which may be relevant for specific activations. Furthermore, functional equivalence in practice may be limited to a finite sample, potentially resulting in reduced complexity. Future research could explore the effects of sign flip and scaling invariance and investigate advanced optimization algorithms or designs for deep learning. We also acknowledge that the importance of deriving the lower bound of the covering numbers. We intend to pursue these areas of study to enhance our understanding in the future.

		

	{\small
		\bibliographystyle{apalike}
		\bibliography{EQ_FNN.bib}

\begin{thebibliography}{}

\bibitem[Ainsworth et~al., 2022]{ainsworth2022git}
Ainsworth, S.~K., Hayase, J., and Srinivasa, S. (2022).
\newblock Git re-basin: Merging models modulo permutation symmetries.
\newblock {\em arXiv preprint arXiv:2209.04836}.

\bibitem[Allen-Zhu et~al., 2019]{allen2019convergence}
Allen-Zhu, Z., Li, Y., and Song, Z. (2019).
\newblock A convergence theory for deep learning via over-parameterization.
\newblock In {\em International Conference on Machine Learning}, pages
  242--252. PMLR.

\bibitem[Anthony et~al., 1999]{anthony1999neural}
Anthony, M., Bartlett, P.~L., Bartlett, P.~L., et~al. (1999).
\newblock {\em Neural network learning: Theoretical foundations}, volume~9.
\newblock cambridge university press Cambridge.

\bibitem[Badrinarayanan et~al., 2015a]{badrinarayanan2015symmetry}
Badrinarayanan, V., Mishra, B., and Cipolla, R. (2015a).
\newblock Symmetry-invariant optimization in deep networks.
\newblock {\em arXiv preprint arXiv:1511.01754}.

\bibitem[Badrinarayanan et~al., 2015b]{badrinarayanan2015understanding}
Badrinarayanan, V., Mishra, B., and Cipolla, R. (2015b).
\newblock Understanding symmetries in deep networks.
\newblock {\em arXiv preprint arXiv:1511.01029}.

\bibitem[Bartlett et~al., 2017]{bartlett2017spectrally}
Bartlett, P.~L., Foster, D.~J., and Telgarsky, M.~J. (2017).
\newblock Spectrally-normalized margin bounds for neural networks.
\newblock {\em Advances in neural information processing systems}, 30.

\bibitem[Bartlett et~al., 2019]{bartlett2019nearly}
Bartlett, P.~L., Harvey, N., Liaw, C., and Mehrabian, A. (2019).
\newblock Nearly-tight vc-dimension and pseudodimension bounds for piecewise
  linear neural networks.
\newblock {\em The Journal of Machine Learning Research}, 20(1):2285--2301.

\bibitem[Baum and Haussler, 1988]{baum1988size}
Baum, E. and Haussler, D. (1988).
\newblock What size net gives valid generalization?
\newblock {\em Advances in neural information processing systems}, 1.

\bibitem[Belkin et~al., 2019]{belkin2019reconciling}
Belkin, M., Hsu, D., Ma, S., and Mandal, S. (2019).
\newblock Reconciling modern machine-learning practice and the classical
  bias--variance trade-off.
\newblock {\em Proceedings of the National Academy of Sciences},
  116(32):15849--15854.

\bibitem[Bona-Pellissier et~al., 2021]{bona2021parameter}
Bona-Pellissier, J., Bachoc, F., and Malgouyres, F. (2021).
\newblock Parameter identifiability of a deep feedforward relu neural network.
\newblock {\em arXiv preprint arXiv:2112.12982}.

\bibitem[Brea et~al., 2019]{brea2019weight}
Brea, J., Simsek, B., Illing, B., and Gerstner, W. (2019).
\newblock Weight-space symmetry in deep networks gives rise to permutation
  saddles, connected by equal-loss valleys across the loss landscape.
\newblock {\em arXiv preprint arXiv:1907.02911}.

\bibitem[Bui Thi~Mai and Lampert, 2020]{bui2020functional}
Bui Thi~Mai, P. and Lampert, C. (2020).
\newblock Functional vs. parametric equivalence of relu networks.
\newblock In {\em 8th International Conference on Learning Representations}.

\bibitem[Chen et~al., 1993]{chen1993geometry}
Chen, A.~M., Lu, H.-m., and Hecht-Nielsen, R. (1993).
\newblock On the geometry of feedforward neural network error surfaces.
\newblock {\em Neural computation}, 5(6):910--927.

\bibitem[Cho and Lee, 2017]{cho2017riemannian}
Cho, M. and Lee, J. (2017).
\newblock Riemannian approach to batch normalization.
\newblock {\em Advances in Neural Information Processing Systems}, 30.

\bibitem[Cybenko, 1989]{cybenko1989approximation}
Cybenko, G. (1989).
\newblock Approximation by superpositions of a sigmoidal function.
\newblock {\em Mathematics of control, signals and systems}, 2(4):303--314.

\bibitem[Dereich and Kassing, 2022]{dereich2022minimal}
Dereich, S. and Kassing, S. (2022).
\newblock On minimal representations of shallow relu networks.
\newblock {\em Neural Networks}, 148:121--128.

\bibitem[Devlin et~al., 2018]{devlin2018bert}
Devlin, J., Chang, M.-W., Lee, K., and Toutanova, K. (2018).
\newblock Bert: Pre-training of deep bidirectional transformers for language
  understanding.
\newblock {\em arXiv preprint arXiv:1810.04805}.

\bibitem[Du et~al., 2019]{du2019gradient}
Du, S., Lee, J., Li, H., Wang, L., and Zhai, X. (2019).
\newblock Gradient descent finds global minima of deep neural networks.
\newblock In {\em International conference on machine learning}, pages
  1675--1685. PMLR.

\bibitem[Dudley, 1967]{dudley1967sizes}
Dudley, R.~M. (1967).
\newblock The sizes of compact subsets of hilbert space and continuity of
  gaussian processes.
\newblock {\em Journal of Functional Analysis}, 1(3):290--330.

\bibitem[Dudley, 2010]{dudley2010universal}
Dudley, R.~M. (2010).
\newblock Universal donsker classes and metric entropy.
\newblock In {\em Selected Works of RM Dudley}, pages 345--365. Springer.

\bibitem[Elbr{\"a}chter et~al., 2019]{elbrachter2019degenerate}
Elbr{\"a}chter, D.~M., Berner, J., and Grohs, P. (2019).
\newblock How degenerate is the parametrization of neural networks with the
  relu activation function?
\newblock {\em Advances in Neural Information Processing Systems}, 32.

\bibitem[Entezari et~al., 2021]{entezari2021role}
Entezari, R., Sedghi, H., Saukh, O., and Neyshabur, B. (2021).
\newblock The role of permutation invariance in linear mode connectivity of
  neural networks.
\newblock {\em arXiv preprint arXiv:2110.06296}.

\bibitem[Fang et~al., 2020]{fang2020theory}
Fang, Z., Feng, H., Huang, S., and Zhou, D.-X. (2020).
\newblock Theory of deep convolutional neural networks ii: Spherical analysis.
\newblock {\em Neural Networks}, 131:154--162.

\bibitem[Fefferman and Markel, 1993]{fefferman1993recovering}
Fefferman, C. and Markel, S. (1993).
\newblock Recovering a feed-forward net from its output.
\newblock {\em Advances in neural information processing systems}, 6.

\bibitem[Frankle and Carbin, 2019]{frankle2018the}
Frankle, J. and Carbin, M. (2019).
\newblock The lottery ticket hypothesis: Finding sparse, trainable neural
  networks.
\newblock In {\em International Conference on Learning Representations}.

\bibitem[Glorot and Bengio, 2010]{glorot2010understanding}
Glorot, X. and Bengio, Y. (2010).
\newblock Understanding the difficulty of training deep feedforward neural
  networks.
\newblock In {\em Proceedings of the thirteenth international conference on
  artificial intelligence and statistics}, pages 249--256. JMLR Workshop and
  Conference Proceedings.

\bibitem[Glorot et~al., 2011]{glorot2011deep}
Glorot, X., Bordes, A., and Bengio, Y. (2011).
\newblock Deep sparse rectifier neural networks.
\newblock In {\em Proceedings of the fourteenth international conference on
  artificial intelligence and statistics}, pages 315--323. JMLR Workshop and
  Conference Proceedings.

\bibitem[Goldberg and Jerrum, 1993]{goldberg1993bounding}
Goldberg, P. and Jerrum, M. (1993).
\newblock Bounding the vapnik-chervonenkis dimension of concept classes
  parameterized by real numbers.
\newblock In {\em Proceedings of the sixth annual conference on Computational
  learning theory}, pages 361--369.

\bibitem[Golowich et~al., 2018]{golowich2018size}
Golowich, N., Rakhlin, A., and Shamir, O. (2018).
\newblock Size-independent sample complexity of neural networks.
\newblock In {\em Conference On Learning Theory}, pages 297--299. PMLR.

\bibitem[Grigsby et~al., 2022a]{grigsby2022local}
Grigsby, J.~E., Lindsey, K., and Masden, M. (2022a).
\newblock Local and global topological complexity measures of relu neural
  network functions.
\newblock {\em arXiv preprint arXiv:2204.06062}.

\bibitem[Grigsby et~al., 2022b]{grigsby2022functional}
Grigsby, J.~E., Lindsey, K., Meyerhoff, R., and Wu, C. (2022b).
\newblock Functional dimension of feedforward relu neural networks.
\newblock {\em arXiv preprint arXiv:2209.04036}.

\bibitem[Gunasekar et~al., 2018a]{gunasekar2018characterizing}
Gunasekar, S., Lee, J., Soudry, D., and Srebro, N. (2018a).
\newblock Characterizing implicit bias in terms of optimization geometry.
\newblock In {\em International Conference on Machine Learning}, pages
  1832--1841. PMLR.

\bibitem[Gunasekar et~al., 2018b]{gunasekar2018implicit}
Gunasekar, S., Lee, J.~D., Soudry, D., and Srebro, N. (2018b).
\newblock Implicit bias of gradient descent on linear convolutional networks.
\newblock {\em Advances in neural information processing systems}, 31.

\bibitem[Hanin and Rolnick, 2019]{hanin2019deep}
Hanin, B. and Rolnick, D. (2019).
\newblock Deep relu networks have surprisingly few activation patterns.
\newblock {\em Advances in neural information processing systems}, 32.

\bibitem[Haussler, 1995]{haussler1995sphere}
Haussler, D. (1995).
\newblock Sphere packing numbers for subsets of the boolean n-cube with bounded
  vapnik-chervonenkis dimension.
\newblock {\em Journal of Combinatorial Theory, Series A}, 69(2):217--232.

\bibitem[He et~al., 2015]{he2015delving}
He, K., Zhang, X., Ren, S., and Sun, J. (2015).
\newblock Delving deep into rectifiers: Surpassing human-level performance on
  imagenet classification.
\newblock In {\em Proceedings of the IEEE international conference on computer
  vision}, pages 1026--1034.

\bibitem[He et~al., 2016]{he2016deep}
He, K., Zhang, X., Ren, S., and Sun, J. (2016).
\newblock Deep residual learning for image recognition.
\newblock In {\em Proceedings of the IEEE conference on computer vision and
  pattern recognition}, pages 770--778.

\bibitem[Hecht-Nielsen, 1990]{hecht1990algebraic}
Hecht-Nielsen, R. (1990).
\newblock On the algebraic structure of feedforward network weight spaces.
\newblock In {\em Advanced Neural Computers}, pages 129--135. Elsevier.

\bibitem[Hertrich et~al., 2021]{hertrich2021towards}
Hertrich, C., Basu, A., Di~Summa, M., and Skutella, M. (2021).
\newblock Towards lower bounds on the depth of relu neural networks.
\newblock {\em Advances in Neural Information Processing Systems},
  34:3336--3348.

\bibitem[Hornik, 1991]{hornik1991approximation}
Hornik, K. (1991).
\newblock Approximation capabilities of multilayer feedforward networks.
\newblock {\em Neural networks}, 4(2):251--257.

\bibitem[Jiao et~al., 2023]{jiao2023deep}
Jiao, Y., Shen, G., Lin, Y., and Huang, J. (2023).
\newblock Deep nonparametric regression on approximate manifolds: Nonasymptotic
  error bounds with polynomial prefactors.
\newblock {\em The Annals of Statistics}, 51(2):691--716.

\bibitem[Jordan et~al., 2022]{jordan2022repair}
Jordan, K., Sedghi, H., Saukh, O., Entezari, R., and Neyshabur, B. (2022).
\newblock Repair: Renormalizing permuted activations for interpolation repair.
\newblock {\em arXiv preprint arXiv:2211.08403}.

\bibitem[Kohler and Langer, 2021]{kohler2021rate}
Kohler, M. and Langer, S. (2021).
\newblock On the rate of convergence of fully connected deep neural network
  regression estimates.
\newblock {\em The Annals of Statistics}, 49(4):2231--2249.

\bibitem[K{\r{u}}rkov{'a} and Kainen, 1994]{kuurkova1994functionally}
K{\r{u}}rkov{'a}, V. and Kainen, P.~C. (1994).
\newblock Functionally equivalent feedforward neural networks.
\newblock {\em Neural Computation}, 6(3):543--558.

\bibitem[Li et~al., 2018]{li2018tighter}
Li, X., Lu, J., Wang, Z., Haupt, J., and Zhao, T. (2018).
\newblock On tighter generalization bound for deep neural networks: Cnns,
  resnets, and beyond.
\newblock {\em arXiv preprint arXiv:1806.05159}.

\bibitem[Lin and Zhang, 2019]{lin2019generalization}
Lin, S. and Zhang, J. (2019).
\newblock Generalization bounds for convolutional neural networks.
\newblock {\em arXiv preprint arXiv:1910.01487}.

\bibitem[Lu et~al., 2021a]{lu2020deep}
Lu, J., Shen, Z., Yang, H., and Zhang, S. (2021a).
\newblock Deep network approximation for smooth functions.
\newblock {\em SIAM Journal on Mathematical Analysis}, 53(5):5465--5506.

\bibitem[Lu et~al., 2021b]{lu2021deep}
Lu, J., Shen, Z., Yang, H., and Zhang, S. (2021b).
\newblock Deep network approximation for smooth functions.
\newblock {\em SIAM Journal on Mathematical Analysis}, 53(5):5465--5506.

\bibitem[Lu et~al., 2017]{lu2017expressive}
Lu, Z., Pu, H., Wang, F., Hu, Z., and Wang, L. (2017).
\newblock The expressive power of neural networks: A view from the width.
\newblock {\em Advances in neural information processing systems}, 30.

\bibitem[Mao et~al., 2021]{mao2021theory}
Mao, T., Shi, Z., and Zhou, D.-X. (2021).
\newblock Theory of deep convolutional neural networks iii: Approximating
  radial functions.
\newblock {\em Neural Networks}, 144:778--790.

\bibitem[Martinelli et~al., 2023]{martinelli2023expand}
Martinelli, F., Simsek, B., Brea, J., and Gerstner, W. (2023).
\newblock Expand-and-cluster: exact parameter recovery of neural networks.
\newblock {\em arXiv preprint arXiv:2304.12794}.

\bibitem[Meng et~al., 2018]{meng2018g}
Meng, Q., Zheng, S., Zhang, H., Chen, W., Ye, Q., Ma, Z.-M., Yu, N., and Liu,
  T.-Y. (2018).
\newblock G-sgd: Optimizing relu neural networks in its positively
  scale-invariant space.
\newblock In {\em International Conference on Learning Representations}.

\bibitem[Mohri et~al., 2018]{mohri2018foundations}
Mohri, M., Rostamizadeh, A., and Talwalkar, A. (2018).
\newblock {\em Foundations of machine learning}.
\newblock MIT press.

\bibitem[Navon et~al., 2023]{pmlr-v202-navon23a}
Navon, A., Shamsian, A., Achituve, I., Fetaya, E., Chechik, G., and Maron, H.
  (2023).
\newblock Equivariant architectures for learning in deep weight spaces.
\newblock In Krause, A., Brunskill, E., Cho, K., Engelhardt, B., Sabato, S.,
  and Scarlett, J., editors, {\em Proceedings of the 40th International
  Conference on Machine Learning}, volume 202 of {\em Proceedings of Machine
  Learning Research}, pages 25790--25816. PMLR.

\bibitem[Neyshabur, 2017]{neyshabur2017implicit}
Neyshabur, B. (2017).
\newblock Implicit regularization in deep learning.
\newblock {\em arXiv preprint arXiv:1709.01953}.

\bibitem[Neyshabur et~al., 2017a]{neyshabur2017exploring}
Neyshabur, B., Bhojanapalli, S., McAllester, D., and Srebro, N. (2017a).
\newblock Exploring generalization in deep learning.
\newblock {\em Advances in neural information processing systems}, 30.

\bibitem[Neyshabur et~al., 2017b]{neyshabur2017pac}
Neyshabur, B., Bhojanapalli, S., and Srebro, N. (2017b).
\newblock A pac-bayesian approach to spectrally-normalized margin bounds for
  neural networks.
\newblock {\em arXiv preprint arXiv:1707.09564}.

\bibitem[Neyshabur et~al., 2019]{neyshabur2018the}
Neyshabur, B., Li, Z., Bhojanapalli, S., LeCun, Y., and Srebro, N. (2019).
\newblock The role of over-parametrization in generalization of neural
  networks.
\newblock In {\em International Conference on Learning Representations}.

\bibitem[Neyshabur et~al., 2015a]{neyshabur2015path}
Neyshabur, B., Salakhutdinov, R.~R., and Srebro, N. (2015a).
\newblock Path-sgd: Path-normalized optimization in deep neural networks.
\newblock {\em Advances in neural information processing systems}, 28.

\bibitem[Neyshabur et~al., 2014]{neyshabur2014search}
Neyshabur, B., Tomioka, R., and Srebro, N. (2014).
\newblock In search of the real inductive bias: On the role of implicit
  regularization in deep learning.
\newblock {\em arXiv preprint arXiv:1412.6614}.

\bibitem[Neyshabur et~al., 2015b]{neyshabur2015norm}
Neyshabur, B., Tomioka, R., and Srebro, N. (2015b).
\newblock Norm-based capacity control in neural networks.
\newblock In {\em Conference on learning theory}, pages 1376--1401. PMLR.

\bibitem[Novak et~al., 2018]{novak2018sensitivity}
Novak, R., Bahri, Y., Abolafia, D.~A., Pennington, J., and Sohl-Dickstein, J.
  (2018).
\newblock Sensitivity and generalization in neural networks: an empirical
  study.
\newblock In {\em International Conference on Learning Representations}.

\bibitem[Ongie et~al., 2019]{ongie2019function}
Ongie, G., Willett, R., Soudry, D., and Srebro, N. (2019).
\newblock A function space view of bounded norm infinite width relu nets: The
  multivariate case.
\newblock {\em arXiv preprint arXiv:1910.01635}.

\bibitem[Petersen et~al., 2021]{petersen2021topological}
Petersen, P., Raslan, M., and Voigtlaender, F. (2021).
\newblock Topological properties of the set of functions generated by neural
  networks of fixed size.
\newblock {\em Foundations of computational mathematics}, 21:375--444.

\bibitem[Petersen and Voigtlaender, 2018]{petersen2018optimal}
Petersen, P. and Voigtlaender, F. (2018).
\newblock Optimal approximation of piecewise smooth functions using deep relu
  neural networks.
\newblock {\em Neural Networks}, 108:296--330.

\bibitem[Petzka et~al., 2020]{petzka2020notes}
Petzka, H., Trimmel, M., and Sminchisescu, C. (2020).
\newblock Notes on the symmetries of 2-layer relu-networks.
\newblock In {\em Proceedings of the northern lights deep learning workshop},
  volume~1, pages 6--6.

\bibitem[Radford et~al., 2018]{radford2018improving}
Radford, A., Narasimhan, K., Salimans, T., Sutskever, I., et~al. (2018).
\newblock Improving language understanding by generative pre-training.

\bibitem[Razin and Cohen, 2020]{razin2020implicit}
Razin, N. and Cohen, N. (2020).
\newblock Implicit regularization in deep learning may not be explainable by
  norms.
\newblock {\em Advances in neural information processing systems},
  33:21174--21187.

\bibitem[Reddi et~al., 2019]{reddi2019convergence}
Reddi, S.~J., Kale, S., and Kumar, S. (2019).
\newblock On the convergence of adam and beyond.
\newblock {\em arXiv preprint arXiv:1904.09237}.

\bibitem[Rolnick and Kording, 2020]{rolnick2020reverse}
Rolnick, D. and Kording, K. (2020).
\newblock Reverse-engineering deep relu networks.
\newblock In {\em International Conference on Machine Learning}, pages
  8178--8187. PMLR.

\bibitem[Shang et~al., 2016]{shang2016understanding}
Shang, W., Sohn, K., Almeida, D., and Lee, H. (2016).
\newblock Understanding and improving convolutional neural networks via
  concatenated rectified linear units.
\newblock In {\em international conference on machine learning}, pages
  2217--2225. PMLR.

\bibitem[Shen et~al., 2022]{shen2022approximation}
Shen, G., Jiao, Y., Lin, Y., and Huang, J. (2022).
\newblock Approximation with cnns in sobolev space: with applications to
  classification.
\newblock {\em Advances in Neural Information Processing Systems},
  35:2876--2888.

\bibitem[Simsek et~al., 2021]{simsek2021geometry}
Simsek, B., Ged, F., Jacot, A., Spadaro, F., Hongler, C., Gerstner, W., and
  Brea, J. (2021).
\newblock Geometry of the loss landscape in overparameterized neural networks:
  Symmetries and invariances.
\newblock In {\em International Conference on Machine Learning}, pages
  9722--9732. PMLR.

\bibitem[Stock et~al., 2019]{stock2018equinormalization}
Stock, P., Graham, B., Gribonval, R., and Jégou, H. (2019).
\newblock Equi-normalization of neural networks.
\newblock In {\em International Conference on Learning Representations}.

\bibitem[Stock and Gribonval, 2022]{stock2022embedding}
Stock, P. and Gribonval, R. (2022).
\newblock An embedding of relu networks and an analysis of their
  identifiability.
\newblock {\em Constructive Approximation}, pages 1--47.

\bibitem[Sun, 2020]{sun2020optimization}
Sun, R.-Y. (2020).
\newblock Optimization for deep learning: An overview.
\newblock {\em Journal of the Operations Research Society of China},
  8(2):249--294.

\bibitem[Sussmann, 1992]{sussmann1992uniqueness}
Sussmann, H.~J. (1992).
\newblock Uniqueness of the weights for minimal feedforward nets with a given
  input-output map.
\newblock {\em Neural networks}, 5(4):589--593.

\bibitem[Vaswani et~al., 2017]{vaswani2017attention}
Vaswani, A., Shazeer, N., Parmar, N., Uszkoreit, J., Jones, L., Gomez, A.~N.,
  Kaiser, {\L}., and Polosukhin, I. (2017).
\newblock Attention is all you need.
\newblock {\em Advances in neural information processing systems}, 30.

\bibitem[Yarotsky, 2017]{yarotsky2017error}
Yarotsky, D. (2017).
\newblock Error bounds for approximations with deep relu networks.
\newblock {\em Neural Networks}, 94:103--114.

\bibitem[Yarotsky, 2018]{yarotsky2018optimal}
Yarotsky, D. (2018).
\newblock Optimal approximation of continuous functions by very deep relu
  networks.
\newblock In {\em Conference on Learning Theory}, pages 639--649. PMLR.

\bibitem[Zhang et~al., 2017]{zhang2017understanding}
Zhang, C., Bengio, S., Hardt, M., Recht, B., and Vinyals, O. (2017).
\newblock Understanding deep learning requires rethinking generalization.
\newblock In {\em International Conference on Learning Representations}.

\bibitem[Zhou, 2020a]{zhou2020theory}
Zhou, D.-X. (2020a).
\newblock Theory of deep convolutional neural networks: Downsampling.
\newblock {\em Neural Networks}, 124:319--327.

\bibitem[Zhou, 2020b]{zhou2020universality}
Zhou, D.-X. (2020b).
\newblock Universality of deep convolutional neural networks.
\newblock {\em Appl. Comput. Harmon. Anal.}, 48(2):787--794.

\end{thebibliography}
	}

	\appendix
	
	\section*{Appendix}
	
	In this Appendix, we present the technical details of the proof of theorems and provide supporting definition and lemmas.
	
	\section{Proof of Theorems}
	In the proofs, we adopt the following notation. We introduce the $L^\infty$-norm of a collection of parameters $\theta=(W^{(1)},b^{(1)},\ldots,W^{(L)},b^{(L)},W^{(L+1)},b^{(L+1)}).$ We define the infinity norm of the collection of parameters by $\Vert\theta\Vert_\infty=\Vert (W^{(1)},b^{(1)},\ldots,W^{(L)},b^{(L)},W^{(L+1)},b^{(L+1)})\Vert_\infty=\max\{\max_{l=1,\ldots,L} \Vert W^{(l)}\Vert_\infty,\max_{l=1,\ldots,L} \Vert b^{(l)}\Vert_\infty\}$. Here, $\Vert \cdot\Vert_\infty$ denotes the maximum absolute value of a vector or matrix.  We let $\Vert\cdot\Vert_2$ denote the $L^2$ norm of a vector. For any matrix $A \in \mathbb{R}^{m \times n}$, the spectral norm of $A$ is denoted by $\Vert A\Vert_2=\max_{x \neq 0} {\Vert Ax\Vert_2}/{\Vert x\Vert_2} $, defined as the largest singular value of $A$ or the square root of the largest eigenvalue of the matrix $A^\top A$. Also we have $\Vert A\Vert_2 \leq \sqrt{mn}\Vert A\Vert_\infty.$
	
	\subsection{Proof of Theorem \ref{thm_perm}}
	\begin{proof}
		The proof is straight forward based on the properties of permutation matrix and element-wise activation functions. Firstly, for any $n\times n$ permutation matrix $P$, it is true that $PP^\top=P^\top P=I_n$ where $I_n$ is the $n\times n$ identity matrix. Secondly, for any element-wise activation function $\sigma$, any $n$-dimensional vector $x\in\mathbb{R}^n$ and any $n\times n$ permutation matrix $P$, it is easy to check
		$$\sigma(Px)=P\sigma(x).$$
		Then for any deep neural network $$f(x;\theta) = W^{(L+1)} \sigma_L(W^{(L)}\cdots\sigma_1(W^{(1)}_1 x+b^{(1)}_1)\cdots)+ b^{(L)}_1) +b^{(L+1)},$$
		and any permutation matrices $P_1,\ldots,P_{L}$, it is easy to check
		\begin{align*}
			&W^{(L+1)} P^\top_L \sigma_L(P_LW^{(L)}P^\top_{L-1}\cdots\sigma_1(P_1W^{(1)} x+P_1b^{(1)})\cdots)+ P_Lb^{(L)}) +b^{(L+1)}\\
			&=W^{(L+1)} \sigma_L(W^{(L)}\cdots\sigma_1(W^{(1)} x+b^{(1)})\cdots)+ b^{(L)}) +b^{(L+1)},
		\end{align*}
		which completes the proof.
	\end{proof}

	\subsection{Proof of Theorem \ref{cv_shallow}}
	
	\begin{proof}	
		
		Firstly, by the property of permutation invariance, we know that the neural networks $\{ f(x;\theta)=W^{(2)}\sigma_1(W^{(1)}x+b^{(1)})+b^{(2)}: \theta\in\Theta_0\}$ parameterized by $\Theta_0$ contains all the functions in $\{ f(x;\theta)=W^{(2)}\sigma_1(W^{(1)}x+b^{(1)})+b^{(2)}: \theta\in\Theta\}$.
		The covering numbers of these two class of functions are the same. Then we can consider the covering number of $\{ f(x;\theta)=W^{(2)}\sigma_1(W^{(1)}x+b^{(1)})+b^{(2)}: \theta\in\Theta_0\}$ where $$\Theta_0:=\{\theta\in[-B,B]^{\mathcal{S}}: b^{(1)}_1\ge b^{(1)}_2\ge\cdots\ge b^{(1)}_{d_1} \}.$$
		
		Recall that for a single hidden layer neural network $f(\cdot;\theta)$ parameterized by $\theta=(W^{(1)},b^{(1)},W^{(2)},b^{(2)})$, the parameters $W^{(1)}\in\mathbb{R}^{d_1\times d_0}$, $b^{(1)}\in\mathbb{R}^{d_1}$, $W^{(2)}\in\mathbb{R}^{1\times d_1}$, and $b^{(2)}\in\mathbb{R}^{1}$ have components bounded by $B$. 
		We start by considering the covering number of the activated linear transformations 
		$$\mathcal{H}:=\{\sigma_1\circ\mathcal{A}_1: (W^{(1)},b^{(1)})\in \Theta_0^{(1)} \},$$
		where $\Theta_0^{(1)}=\{[-B,B]^{d_1\times d_0+d_1}, b^{(1)}_1\ge b^{(1)}_2\ge\cdots\ge b^{(1)}_{d_1}\}$, $\sigma_1$ is a $\rho$-Lipschitz activation function on $[-BB_x-B,BB_x+B]$, and $\mathcal{A}_1(x)=W^{(1)}x+b^{(1)}$. Here $\mathcal{A}_1$ output $d_1$-dimensional vectors, and we define $\Vert\mathcal{A}_1 \Vert_\infty:=\sup_{x\in\mathcal{X}}\Vert \sigma_1\circ\mathcal{A}_1(x) \Vert_2$ for vector-valued functions.
		
		Let $\epsilon_1>0$ be a real number and $\Theta^{(1)}_{0,\epsilon_1}=\{(W^{(1)}_j,b^{(1)}_j)\}_{j=1}^{N_1}$ be a minimal $\epsilon_1$-covering of $\Theta^{(1)}_0$ under $\Vert\cdot\Vert_\infty$ norm with covering number $\mathcal{N}_1=\mathcal{N}(\Theta_0^{(1)},\epsilon_1,\Vert\cdot\Vert_\infty)$. Then for any $(W^{(1)},b^{(1)})\in \Theta_0^{(1)}$, there exist a $(W^{(1)}_j,b^{(1)}_j)$ such that $\Vert (W^{(1)}_j-W^{(1)},b^{(1)}_j-b^{(1)})\Vert_\infty\leq\epsilon_1$. For any $x\in\mathcal{X}$ and $(W^{(1)},b^{(1)})\in \Theta_0^{(1)}$, it is not hard to check $\Vert W^{(1)}x+b^{(1)}\Vert_2\le B(\sqrt{d_0d_1}B_x+1)$. Then we have
		\begin{align*}
			\Vert\sigma_1(W^{(1)}_ix+b^{(1)}_i)-\sigma_1(W^{(1)}_jx+b^{(1)}_j)\Vert_2
			\leq &\rho\Vert(W^{(1)}x+b^{(1)})-(W^{(1)}_jx+b^{(1)}_j)\Vert_2\\
			\leq &\rho\Vert(W^{(1)}-W^{(1)}_j)x\Vert_2+\rho\Vert(b^{(1)}-b^{(1)}_j)\Vert_2\\
			\leq&\rho\sqrt{d_0d_1}\Vert W^{(1)}-W^{(1)}_j\Vert_\infty \Vert x\Vert_2+\rho\sqrt{d_1}\Vert(b^{(1)}-b^{(1)}_j)\Vert_\infty\\
			\leq&\rho\epsilon_1(\sqrt{d_0d_1}B_x+\sqrt{d_1}).
		\end{align*}
		
		This implies that $$\mathcal{H}_1=\{\sigma_1\circ\mathcal{A}_1:(W^{(1)}.b^{(1)})\in \Theta^{(1)}_{0,\epsilon_1}\}$$ is a set with no more than $\mathcal{N}_1$ elements and it covers $\mathcal{H}$ under $\Vert\cdot\Vert_\infty$ norm with radius $\epsilon_1\rho (\sqrt{d_0d_1}B_x+\sqrt{d_1})$.  By Lemma \ref{cv_vol}, the covering number $\mathcal{N}_1=\mathcal{N}(\Theta_0^{(1)},\epsilon,\Vert\cdot\Vert_\infty)\le {\rm Volume(\Theta_0^{(1)})}\times({2}/{\epsilon_1})^{d_1\times d_0+d_1}=(4B/\epsilon_1)^{d_1\times d_0+d_1}/d_1!$. 
		
		Now, let $\epsilon_2>0$ be a real number and $\Theta^{(2)}_{0,\epsilon_2}=\{(W^{(2)}_j,b^{(2)}_j)\}_{j=1}^{N_2}$ be a minimal $\epsilon_2$-covering of $\Theta^{(2)}_0$ under $\Vert\cdot\Vert_\infty$ norm with $\mathcal{N}_2=\mathcal{N}(\Theta_0^{(2)},\epsilon_2,\Vert\cdot\Vert_\infty)\le {\rm Volume(\Theta_0^{(2)})}\times (2/\epsilon_2)^{d_1\times1+1}= (4B/\epsilon_2)^{d_1+1}$. Also we construct a class of functions by
		$$\mathcal{H}_2=\{\mathcal{A}_2\circ h: h\in\mathcal{H}_1,(W^{(2)},b^{(2)})\in\Theta^{(2)}_{0,\epsilon_2}\},$$
		where  $\mathcal{A}_2(x)=W^{(2)}x+b^{(2)}$.  
		Now for any $f=\mathcal{A}_2\circ\sigma_1\circ\mathcal{A}_1$ parameterized by $\theta=(W^{(1)},b^{(1)},W^{(2)},b^{(2)})\in\Theta_0$, by the definition of covering, there exists $h_j\in\mathcal{H}_1$ such that $\Vert h_j-\sigma_1\circ\mathcal{A}_1\Vert_\infty\leq\rho\epsilon_1 (\sqrt{d_0d_1}B_x+\sqrt{d_1})$ and there exists $(W^{(2)}_k,b^{(2)}_k)\in\Theta^{(2)}_{0,\epsilon_2}$ such that $\Vert (W^{(2)}_k-W^{(2)},b^{(2)}_k-b^{(2)})\Vert_\infty\leq\epsilon_2$. Then fro any $x\in\mathcal{X}$
		\begin{align*}
			&\Vert f(x)-W^{(2)}_k h_j(x)+b^{(2)}_k\Vert_2\\
			= & \Vert W^{(2)}\sigma_1\circ\mathcal{A}_1(x)+b^{(2)}-W^{(2)}_k h_j(x)-b^{(2)}_k\Vert_2\\
			\le & \Vert W^{(2)}\sigma_1\circ\mathcal{A}_1(x) -W^{(2)}_k h_j(x)\Vert_2 +\Vert b^{(2)}-b^{(2)}_k\Vert_2\\
			\le & \Vert W^{(2)}\sigma_1\circ\mathcal{A}_1(x) -W^{(2)} h_j(x)\Vert_2 +\Vert W^{(2)} h_j(x) -W^{(2)}_k h_j(x)\Vert_2+\sqrt{d_2}\epsilon_2\\
			\le & \sqrt{d_1}B\rho\epsilon_1(\sqrt{d_0d_1}B_x+\sqrt{d_1})+\epsilon_2 \sqrt{d_1}B(\sqrt{d_0d_1}B_x+\sqrt{d_1}) +\epsilon_2\\
			\le & 2\sqrt{d_0}d_1B(B_x+1)[\rho\epsilon_1+\epsilon_2],
		\end{align*}
		which implies that $\mathcal{H}_2$ is a $(2\sqrt{d_0}d_1B(B_x+1)[\rho\epsilon_1+\epsilon_2])$-covering of the neural networks $\mathcal{F}(1,d_0,d_1,B)$, where there are at most $(4B/\epsilon_1)^{d_1\times d_0+d_1}/d_1!\times (4B/\epsilon_2)^{d_1+1}$ elements in $\mathcal{H}_2$. Given $\epsilon>0$, we take $\epsilon_1=\epsilon/(4\rho \sqrt{d_0}d_1B(B_x+1))$ and $\epsilon_2=\epsilon/(4 \sqrt{d_0}d_1B(B_x+1))$, then this implies
		\begin{align*}
			&\mathcal{N}(\mathcal{F}(1,d_0,d_1,B),\epsilon,\Vert\cdot\Vert_\infty)\\
			\le &(4B/\epsilon_1)^{d_1\times d_0+d_1}/d_1!\times (4B/\epsilon_2)^{d_1+1}\\
			= & (16B/\epsilon)^{d_0\times d_1+d_1+d_1+1}\times (\rho \sqrt{d_0}d_1B(B_x+1))^{d_0\times d_1+d_1}\times (\sqrt{d_0}d_1B(B_x+1))^{d_1+1}/d_1!\\
			=& (16B/\epsilon)^{d_0\times d_1+d_1+d_1+1}\times (\sqrt{d_0}d_1B(B_x+1))^{d_0\times d_1+d_1+d_1+1}\times \rho^{d_0\times d_1+d_1}/d_1!\\
			=& (16B^2(B_x+1)\sqrt{d_0}d_1/\epsilon)^{d_0\times d_1+d_1+d_1+1}\times \rho^{d_0\times d_1+d_1}/d_1!,\\
			=& (16B^2(B_x+1)\sqrt{d_0}d_1/\epsilon)^\mathcal{S} \times\rho^{\mathcal{S}_1}/d_1!,
		\end{align*}
		where $\mathcal{S}=d_0\times d_1+d_1+d_1+1$ is the total number of parameters in the network and $\mathcal{S}_1$ denotes the number of parameters in the linear transformation from the input to the hidden layer.
	\end{proof}

	\subsection{Proof of Theorem \ref{cv_deep}}
	\begin{proof}
		Our proof takes into account permutation equivalence and extends Theorem \ref{cv_shallow} by using mathematical induction. Similar approaches using induction can be found in Lemma A.7 of \cite{bartlett2017spectrally} and Lemma 14 of \cite{lin2019generalization}. However, we improve upon their results in three ways. Firstly, we consider the generally defined neural networks where the bias vector is allowed to appear. Secondly, we express our bound in terms of the width, depth, and size of the neural network as well as the infinity norm of the parameters, instead of the spectral norm of the weight matrices, which can be unknown in practice. Thirdly, we utilize permutation equivalence to derive tighter upper bounds on the covering number.
		
		{\noindent \bf Step 1.}
		We analyze the effective region of the parameter space $$\Theta=[-B,B]^\mathcal{S},$$
		where $\mathcal{S}=\sum_{i=0}^{L} d_id_{i+1}+d_{i+1}$ is the total number of parameters in the deep neural network. We denote $\mathcal{S}_l=d_{l-1}d_{l}+d_l$ by the total number of parameters in $(W^{(l)},B^{(l)})$ in the $l$th layer, and denote $\Theta^{(l)}=\{(W^{(l)},b^{l}): (W^{(l)},b^{(l)})\in[-B,B]^{\mathcal{S}_l}\}$ by the parameter space for $l=1,\ldots,L+1$. By our Theorem \ref{thm_perm}, for any given neural network $f(\cdot;\theta)$ parameterized by $$\theta=(W^{(1)},b^{(1)},\ldots,W^{(L+1)},b^{(L+1)})\in\Theta,$$ 
		there exists $P_1,\ldots,P_{L}$ such that $f(\cdot;\tilde{\theta})$ parameterized by $$\tilde{\theta}=(P_{1}W^{(1)},P_{1}b^{(1)},\ldots,P_{l} W^{(l)} P_{l-1}^\top,P_{l} b^{(1)},\ldots,W^{(L+1)}P_{L}^\top,b^{(L+1)})$$
		implements the same input-output function.  This implies that there exists a  subset $\Theta_0$ of $\Theta$  such that neural networks $\{ f(\cdot;\theta): \theta\in\Theta_0\}$ parameterized by $\Theta_0$ contains all the functions in $\{ f(\cdot;\theta): \theta\in\Theta\}$. The covering numbers of these two class of functions are the same. To be specific, the effective parameter space $$\Theta_0=\Theta_0^{(1)}\times \Theta_0^{(2)}\times\cdots\times\Theta_0^{(L)}\times\Theta_0^{(L+1)},$$
		where
		\begin{align*}
			\Theta_0^{(1)} = &\{(W^{(1)},b^{(1)})\in[-B,B]^{\mathcal{S}_1}: b^{(1)}_1\ge b^{(1)}_2\ge\cdots\ge b^{(1)}_{d_1} \},\\
			\Theta_0^{(l)} = &\{(W^{(l)},b^{(l)})\in[-B,B]^{\mathcal{S}_l}: b^{(l)}_1\ge b^{(l)}_2\ge\cdots\ge b^{(l)}_{d_l} \} {\rm \ for\ }l=2,\ldots,L,\\
			\Theta_0^{(L+1)}& =\{(W^{(L+1)},b^{(L+1)})\in[-B,B]^{\mathcal{S}_{L+1}}\}.\\
		\end{align*}
		In the following, we focus on considering the covering number of $\{ f(\cdot;\theta): \theta\in\Theta_0\}$.
		
		{\noindent \bf Step 2.}
		We start by bounding the covering number for the first activated hidden layer. Let $\mathcal{H}_1=\{\sigma_1\circ\mathcal{A}_1: (W^{(1)}x+b^{(1)})\in\Theta_0^{(1)}\}$ where $\mathcal{A}_1(x)=W^{(1)}x+b^{(1)}$ is the linear transformation from the input to the first hidden layer. Given any $\epsilon_1>0$, in the proof of Theorem \ref{cv_shallow},  we have shown that 
		$$\mathcal{N}_1:=\mathcal{N}(\mathcal{H}_1,\epsilon_1\rho_1\sqrt{d_0d_1}(B_x+1),\Vert\cdot\Vert_\infty)\le{\rm Volume}(\Theta_0^{(1)})\times(2/\epsilon_1)^{\mathcal{S}_1},$$
		and $\Vert h^{(1)}\Vert_\infty\le \rho_1\sqrt{d_0d_1}B(B_x+1)$ for $h^{(1)}\in\mathcal{H}_1.$
		
		{\noindent \bf Step 3.} We use induction to proceed the proof for $l=1,\ldots,L$.
		Let $\mathcal{H}_l=\{\sigma_l\circ\mathcal{A}_l\circ\cdots\circ\sigma_1\circ\mathcal{A}_1: (W^{(k)}x+b^{(k)})\in\Theta_0^{(k)},k=1,\ldots,l\}$ where $\mathcal{A}_k(x)=W^{(k)}x+b^{(k)}$ is the linear transformation from the $(k-1)$th layer to the $k$th layer for $k=1,\ldots,l$. Let $B^{(l)}$ denotes the infinity norm of functions $h^{(l)}\in\mathcal{H}_l$ for $k=1,\ldots,L$. For any $e_l>0$, let $$\tilde{\mathcal{H}}_l=\{h^{(l)}_j\}_{j=1}^{\mathcal{N}(\mathcal{H}_l,e_l,\Vert\cdot\Vert_\infty)}$$
		be a $e_l$-covering of $\mathcal{H}_l$ under the $\Vert\cdot\Vert_\infty$ norm. For any $\epsilon_{l+1}>0$, let
		$$\Theta^{(l+1)}_{0,\epsilon_{l+1}}=\{(W^{(l+1)}_t,b^{(l+1)}_t)\}_{t=1}^{\mathcal{N}(\Theta^{(l+1)}_{0},\epsilon_{l+1},\Vert\cdot\Vert_\infty)}$$
		be a $\epsilon_{l+1}$-covering of $\Theta^{(l+1)}_{0}$. Then for any $h^{(l+1)}=\sigma_{l+1}\circ \mathcal{A}_{l+1}\circ h^{(l)}\in\mathcal{H}_{l+1}$ where $\mathcal{A}_{l+1}(x)=W^{(l+1)}x+b^{(l+1)}$ , there exists $h^{(l)}_j\in \tilde{\mathcal{H}}_l$ and $(W^{(l+1)}_t,b^{(l+1)}_t)\in\Theta^{(l+1)}_{0,\epsilon_{l+1}}$ such that $$\Vert h^{(l)}-h^{(l)}_j\Vert_\infty\le e_l$$ and $$\Vert(W^{(l+1)}_t-W^{(l+1)},b^{(l+1)}_t-b^{(l+1)})\Vert_\infty\le \epsilon_{l+1}.$$
		Then for any $x\in\mathcal{X}$, we have
		\begin{align*}
			&\Vert h^{(l+1)}(x)-\sigma_{l+1}(W^{(l+1)}_t h^{(l)}_j(x)+b^{(l+1)}_t)\Vert_2\\
			= & \Vert \sigma_{l+1}(W^{(l+1)}h^{(l)}(x)+b^{(l+1)})-\sigma_{l+1}(W^{(l+1)}_s h^{(l)}_j(x)-b^{(l+1)}_s)\Vert_2\\
			\le & \rho_{l+1}\Vert (W^{(l+1)}h^{(l)}(x)+b^{(l+1)})-(W^{(l+1)}_s h^{(l)}_j(x)-b^{(l+1)}_s)\Vert_2\\
			\le & \rho_{l+1}\Vert W^{(l+1)}h^{(l)}(x)-W^{(l+1)}_s h^{(l)}_j(x)\Vert_2 +\rho_{l+1}\sqrt{d_{l+1}}\Vert b^{(l+1)}-b^{(l+1)}_s\Vert_\infty\\
			\le & \rho_{l+1}\Vert W^{(l+1)}h^{(l)}(x)-W^{(l+1)} h^{(l)}_j(x)\Vert_2 +\rho_{l+1}\Vert W^{(l+1)}h^{(l)}_j(x)-W^{(l+1)}_s h^{(l)}_j(x)\Vert_2 +\rho_{l+1}\sqrt{d_{l+1}}\epsilon_{l+1}\\
			\le & \rho_{l+1}\sqrt{d_{l}}Be_l +\rho_{l+1}\epsilon_{l+1}\sqrt{d_{l}}B^{(l)} +\rho_{l+1}\sqrt{d_{l+1}}\epsilon_{l+1}\\
			= & \rho_{l+1}(\sqrt{d_{l}}Be_l+(B^{(l)}+\sqrt{d_{l+1}})\epsilon_{l+1}).
		\end{align*}
		Then it is proved that the covering number $\mathcal{N}(\mathcal{H}^{(l+1)},e_{l+1},\Vert\cdot\Vert_\infty)$ with $e_{l+1}=\rho_{l+1}(\sqrt{d_{l}}Be_l+(B^{(l)}+\sqrt{d_{l+1}})\epsilon_{l+1})$ satisfying 
		$$\mathcal{N}(\mathcal{H}^{(l+1)},e_{l+1},\Vert\cdot\Vert_\infty)\le \mathcal{N}(\mathcal{H}^{(l)},e_{l},\Vert\cdot\Vert_\infty)\times\mathcal{N}(\Theta^{(l+1)}_{0},\epsilon_{l+1},\Vert\cdot\Vert_\infty)$$
		for $l=1,\ldots,L-1$. Recall that in the proof of Theorem \ref{cv_shallow}, we have proved that
		$$\mathcal{N}(\mathcal{H}^{(1)},e_1,\Vert\cdot\Vert_\infty)\le\mathcal{N}(\Theta^{(1)}_0,\epsilon_1,\Vert\cdot\Vert_\infty),$$
		where $e_1=\rho_1\epsilon_1\sqrt{d_0d_1}(B_x+1),$ which leads to
		$$\mathcal{N}(\mathcal{H}^{(l)},e_{l},\Vert\cdot\Vert_\infty)\le \Pi_{i=1}^{l}\mathcal{N}(\Theta^{(l)}_{0},\epsilon_{l},\Vert\cdot\Vert_\infty),$$
		for $l=1,\ldots,L$.
		
		{\noindent \bf Step 4.} Next we give upper bounds of $e_l$ and $B^{(l)}$ for $l=1,\ldots,L$. Recall that $B^{(l)}$ denotes the infinity norm of functions $h^{(l)}\in\mathcal{H}_l$ for $k=1,\ldots,L$. Then it is easy to see that $B^{(l)}\ge\rho_lB$ since we can always take the bias vectors to have components $B$ or $-B$. In addition,
		\begin{align*}
			B^{(l+1)}&=\Vert \sigma_{l+1}\circ \mathcal{A}_{l+1}\circ h_{l}\Vert_\infty\le \rho_{l+1}B (\sqrt{d_l}B^{(l)}+\sqrt{d_{l+1}})\le 2\rho_{l+1}\sqrt{d_ld_{l+1}}B B^{(l)}.
		\end{align*}
		As proved in Theorem \ref{cv_shallow}, we know $B^{(1)}\le\rho_1B\sqrt{d_0d_1}(B_x+1)$, then we can get
		$$B^{(l)}\le (2B)^l(B_x+1)\sqrt{d_0}\Pi_{i=1}^{l}d_i\rho_i/(\sqrt{d_l}).$$
		
		Recall $e_{l+1}=\rho_{l+1}(\sqrt{d_{l}}Be_l+(B^{(l)}+\sqrt{d_{l+1}})\epsilon_{l+1})$ for $l=1,\ldots,L-1$, and $e_1=\rho_1\epsilon_1\sqrt{d_0}(B_x+1),$ then by simple mathematics
		\begin{align*}
			e_{L}&=B^{L-1}\Pi_{i=2}^L\rho_i\sqrt{d_{i-1}}e_1+\sum_{i=2}^{L}(B^{(i-1)}+\sqrt{d_i})\epsilon_i\Pi_{j=i}^{L}\rho_j\sqrt{d_{j-1}}/\sqrt{d_{L}}\\
			&\le B^{L-1}\Pi_{i=1}^L\rho_i\sqrt{d_{i-1}} (B_x+1)\epsilon_1 +\sum_{i=2}^L (B_x+1)(2B)^{i-1} \sqrt{d_0}(\Pi_{j=1}^L \rho_j{d_{j}})\epsilon_i/(\sqrt{d_{L}})\\
			& \le 2\sum_{i=1}^L \sqrt{d_0}(B_x+1)(2B)^{i-1}(\Pi_{j=1}^L \rho_j{d_{j}})\epsilon_i/\sqrt{d_{L}}.
		\end{align*}
		
		{\noindent \bf Step 5.} Last we construct a covering of $\mathcal{F}=\mathcal{F}(1,d_0,d_1,B)=\{ f(x;\theta)=W^{(2)}\sigma_1(W^{(1)}x+b^{(1)})+b^{(2)}: \theta\in\Theta_0\}$.
		Let $$\tilde{\mathcal{H}}_L=\{h^{(L)}_j\}_{j=1}^{\mathcal{N}(\mathcal{H}_L,e_L,\Vert\cdot\Vert_\infty)}$$
		be a $e_L$-covering of $\mathcal{H}_L$ under the $\Vert\cdot\Vert_\infty$ norm. For any $\epsilon_{L+1}>0$, let
		$$\Theta^{(L+1)}_{0,\epsilon_{L+1}}=\{(W^{(L+1)}_t,b^{(L+1)}_t)\}_{t=1}^{\mathcal{N}(\Theta^{(L+1)}_{0},\epsilon_{L+1},\Vert\cdot\Vert_\infty)}$$ be a $\epsilon_{L+1}$-covering of $\Theta^{(L+1)}_{0}$. Then for any $f=\mathcal{A}_{L+1}\circ h^{(L)}\in\mathcal{F}$ where $\mathcal{A}_{L+1}(x)=W^{(L+1)}x+b^{(L+1)}$ , there exists $h^{(L)}_j\in \tilde{\mathcal{H}}_L$ and $(W^{(L+1)}_t,b^{(L+1)}_t)\in\Theta^{(L+1)}_{0,\epsilon_{L+1}}$ such that $$\Vert h^{(L)}-h^{(L)}_j\Vert_\infty\le e_L$$ and $$\Vert(W^{(L+1)}_t-W^{(L+1)},b^{(L+1)}_t-b^{(L+1)})\Vert_\infty\le \epsilon_{L+1}.$$
		Then for any $x\in\mathcal{X}$, we have
		\begin{align*}
			&\Vert f(x)-W^{(L+1)}_t h^{(L)}_j(x)+b^{(L+1)}_t\Vert_2\\
			= & \Vert W^{(L+1)}h^{(L)}(x)+b^{(L+1)}-W^{(L+1)}_s h^{(L)}_j(x)-b^{(L+1)}_s\Vert_2\\
			\le & \Vert (W^{(L+1)}h^{(L)}(x)+b^{(L+1)})-(W^{(L+1)}_s h^{(L)}_j(x)-b^{(L+1)}_s)\Vert_2\\
			\le & \Vert W^{(L+1)}h^{(L)}(x)-W^{(L+1)}_s h^{(L)}_j(x)\Vert_2 +\Vert b^{(L+1)}-b^{(L+1)}_s\Vert_2\\
			\le & \Vert W^{(L+1)}h^{(L)}(x)-W^{(L+1)} h^{(L)}_j(x)\Vert_2 +\Vert W^{(L+1)}h^{(L)}_j(x)-W^{(L+1)}_s h^{(L)}_j(x)\Vert_2 +\epsilon_{L+1}\\
			\le & \sqrt{d_{L}}Be_L +\epsilon_{L+1}\sqrt{d_{L}}B^{(L)} +\epsilon_{L+1}\\
			= & (\sqrt{d_{L}}Be_L+(\sqrt{d_L}B^{(L)}+1)\epsilon_{L+1})\\
			\le &  2\sqrt{d_0}\sum_{i=1}^L(B_x+1) (2B)^{i}(\Pi_{j=1}^L \rho_jd_{j})\epsilon_i+ 2(2B)^L(B_x+1)\sqrt{d_0}(\Pi_{j=1}^L \rho_j{d_{j}}) \epsilon_{L+1}\\
			\le &2\sqrt{d_0}(B_x+1)(\Pi_{j=1}^L \rho_j{d_{j}})\sum_{i=1}^{L+1} (2B)^{i}\epsilon_i.
		\end{align*}
		Then we know that the covering number $\mathcal{N}(\mathcal{F},e_{L+1},\Vert\cdot\Vert_\infty)$ with radius $$e_{L+1}=2(B_x+1)\sqrt{d_0}(\Pi_{j=1}^L \rho_j{d_{j}})\sum_{i=1}^{L+1} (2B)^{i-1}\epsilon_i$$ satisfies 
		\begin{align*}
			\mathcal{N}(\mathcal{F},e_{L+1},\Vert\cdot\Vert_\infty)&\le \mathcal{N}(\mathcal{H}^{(L)},e_{L},\Vert\cdot\Vert_\infty)\times\mathcal{N}(\Theta^{(L+1)}_{0},\epsilon_{L+1},\Vert\cdot\Vert_\infty)\\
			&\le \Pi_{i=1}^{L+1}\mathcal{N}(\Theta^{(i)}_{0},\epsilon_{L+1},\Vert\cdot\Vert_\infty)\\
			&\le \Pi_{i=1}^{L+1} {\rm Volume}(\Theta^{(i)}_{0})\times (2/\epsilon_i)^{\mathcal{S}_i}\\
			&= (4B/\epsilon_{L+1})^{\mathcal{S}_{L+1}} \times\Pi_{i=1}^{L} (4B/\epsilon_i)^{\mathcal{S}_i}/(d_i!).
		\end{align*}
		Finally, setting $\epsilon_i=\{2(L+1)\sqrt{d_0}(B_x+1)(2B)^{i}(\Pi_{j=1}^L \rho_j{d_{j}})\}^{-1}\epsilon$ for $i=1,\ldots,L+1$ leads to an upper bound for the covering number $\mathcal{N}(\mathcal{F},\epsilon,\Vert\cdot\Vert_\infty)$
		\begin{align*}
			\mathcal{N}(\mathcal{F},\epsilon,\Vert\cdot\Vert_\infty)
			&\le  (4(L+1)(B_x+1)\sqrt{d_0}(B_x+1)(2B)^{L+2}(\Pi_{j=1}^L \rho_j{d_{j}})/\epsilon)^{\mathcal{S}_{L+1}} \\
			&\qquad\times\Pi_{i=1}^{L} (4(L+1)(2B)^{i+1}(\Pi_{j=1}^L \rho_j\sqrt{d_{j}})/\epsilon)^{\mathcal{S}_i}/(d_i!)\\
			&\le \frac{(4(L+1)\sqrt{d_0}(B_x+1)(2B)^{L+2}(\Pi_{j=1}^L \rho_j{d_{j}})/\epsilon)^{\mathcal{S}}}{d_1!\times d_2!\times\cdots\times d_L!}.
		\end{align*}
		This completes the proof.
	\end{proof}

	\subsection{Proof of Theorem \ref{opt}}
	\begin{proof}
		The proof is straightforward. And we present the proof in three steps. First, we show that if $f_{\theta_{n}}$ is an empirical risk minimizer, then there are at least $d_1^*\times d_2^*\times \cdots\times d_L^*$ empirical risk minimizers with distinct parameterization. Second, we prove that the $(\delta/2)$ neighborhood of these distinct parameterization are disjoint under the $L_\infty$ norm. Lastly, we show that initialization schemes with identical random distribution for weights and bias within layers can indeed increase the probability of convergence for any appropriate optimization algorithms $\mathcal{A}$.
		
		{\noindent \bf Step 1.}
		Suppose $f_{\theta_{n}}$ is an empirical risk minimizer with parameterization
		\begin{align*}
			\theta_{n}=(W^{(1)}_n,b^{(1)}_n,W^{(2)}_n,b^{(2)}_n,\ldots,W^{(L)}_n,b^{(L)}_n,W^{(L+1)}_n,b^{(L+1)}_n).
		\end{align*}
		By Theorem \ref{thm_perm}, for any permutation matrices $P_1,\ldots,P_L$,
		
		{\noindent \bf Step 1.}
		Suppose $f_{\theta_{n}}$ is an empirical risk minimizer with parameterization
		\begin{align*}
			\theta_{n}=(W^{(1)}_n,b^{(1)}_n,W^{(2)}_n,b^{(2)}_n,\ldots,W^{(L)}_n,b^{(L)}_n,W^{(L+1)}_n,b^{(L+1)}_n).
		\end{align*}
		By Theorem \ref{thm_perm}, for any permutation matrices $P_1,\ldots,P_L$,
		\begin{equation}\label{permuted}
			\tilde{\theta}_{n}=(P_1W^{(1)}_n,P_1b^{(1)}_n,P_2W^{(2)}_nP_1^\top,P_2b^{(2)}_n,\ldots,P_LW^{(L)}_n,P_Lb^{(L)}_n,W^{(L+1)}_nP_L^\top,b^{(L+1)}_n),
		\end{equation}
		will lead to an empirical risk minimizer $f_{\tilde{\theta}_n}$.
		However, the concatenated matrices $(W^{(l)}_n;b^{(l)}_n)$ may have identical rows, and the permutation-implemented matrices $(P_lW^{(l)}_nP_{l-1}^\top; P_l;b^{(l)}_n)$ may remain unchanged for some permutation matrices $P_l$. Let $d^*_l$ denote the number of distinct permutations of the rows in $(W^{(l)}_n;b^{(l)}_n)$. Then it is guaranteed that $\{(P_lW^{(l)}_nP_{l-1}^\top; P_l;b^{(l)}_n): P_l, P_{l-1}{\rm \ are\ permutation\ matrices}\}$ has at least  $d^*_l$ distinct elements. Note that $1\le d^*_l\le d_l!$ for $l=1,\ldots,L$ where $d_l$ is the dimension of the bias vector $b^{(l)}$ as well as the number of rows of $(W^{(l)}_n;b^{(l)}_n)$. Specifically, $d^*_l=1$ if and only if all the entries of $b^{(l)}_n$ are identical and all the rows of $W^{(l)}$ are identical. And $d^*_l=d_l!$ if and only if all the rows of concatenated matrix $(W^{(l)}_n,b^{(l)}_n)$ are distinct. Moreover, the distinct elements in $\{(P_lW^{(l)}_nP_{l-1}^\top; P_l;b^{(l)}_n): P_l, P_{l-1}{\rm \ are\ permutation\ matrices}\}$ can range from $1$ to $d_{l-1}!d_l!$. It is 1 if and only if all the entries of $b^{(l)}_n$ are identical and all the entries of $W^{(l)}_n$ are identical;  it is $d_{l-1}!d_l!$ if and only if all the entries of $b^{(l)}_n$ are distinct and all the entries of $W^{(l)}_n$ are distinct.  Lastly, it is easy to see that $\theta_n\not=\tilde{\theta}_n$ if $(W^{(l)}_n;b^{(l)}_n)\not=(P_lW^{(l)}_nP_{l-1}^\top; P_l;b^{(l)}_n)$ for any $l\in\{1,\ldots,L\}$. Then there are at least $d_1^*\times\cdots\times d_L^*$ distinct elements in
		\begin{align}\label{permuated_distinct}
			\tilde{\Theta}_n=\{\tilde{\theta}_{n}{\rm \ defined\ in\ (\ref{permuted})}: P_1,\ldots P_{L}{\rm \ are\ permutation\ matrices}\}.
		\end{align}
		
		{\noindent \bf Step 2.} 
		
		Let $\theta=(W^{(1)},b^{(1)},W^{(2)},b^{(2)},\ldots,W^{(L)},b^{(L)},W^{(L+1)},b^{(L+1)})$ be the collection of parameters of a network and let $\theta_i^{(l)}$ be the $i$th row of concatenated matrix $\theta^{(l)}=(W^{(l)};b^{(l)})$. We define
		\begin{equation}\label{delta_min}
			\Delta_{\rm min}(\theta):=\min_{l\in\{1,\ldots,L\}}\left[\min_{i,j\in\{1,\ldots,d_l\},\theta^{(l)}_i\not=\theta^{(l)}_j}\Vert \theta^{(l)}_i-\theta^{(l)}_j\Vert_\infty\right],
		\end{equation}
		and
		\begin{equation}\label{delta_max}
			\Delta_{\rm max}(\theta):=\max_{l\in\{1,\ldots,L\}}\left[\max_{i,j\in\{1,\ldots,d_l\}}\Vert \theta^{(l)}_i-\theta^{(l)}_j\Vert_\infty\right].
		\end{equation}
		
		Recall that $\Delta_{\rm min}(\theta_{n})=\delta$. 
		This implies that for any two distinct permutation-implemented $\tilde{\theta}_{n,1}$ and $\tilde{\theta}_{n,2}$ in (\ref{permuated_distinct}),
		$$\Delta_{\rm max}(\tilde{\theta}_{n,1}-\tilde{\theta}_{n,2})\ge \delta,$$
		where $\Delta_{\rm max}(\theta_n)$ denote the maximum of the $\Vert\cdot\Vert_\infty$ norm of distinct rows in $\theta^{(l)}_n$ over $l\in{1,\ldots,L}$. Then the neighborhoods  $B_\infty(\tilde{\theta}_n,\delta/2):=\{\theta:\Delta_{\rm max}(\theta-\tilde{\theta}_n)\le\delta/2\}$ in the collection
		\begin{align*}
			\mathcal{B}_{(\delta/2)}=\{B_\infty(\tilde{\theta}_n,\delta/2): \tilde{\theta}_n\in\tilde{\Theta}_n \}
		\end{align*}
		are pairwise disjoint. It worth mentioning that for $\tilde{\theta}=\mathcal{P}(\theta_n)$, we have
		\begin{align}\label{perm_infty}
			B_\infty(\tilde{\theta}_n,\delta/2)=\{\mathcal{P}(\theta):\Delta_{\rm max}(\theta-\theta_{n})\le\delta/2\},
		\end{align}
		by the symmetry of permutation and the definition of $\Delta_{\rm max}$.

		{\noindent \bf Step 3.} For any given permutation matrices $P_1,\ldots,P_L$, we let $\mathcal{P}=\mathcal{P}(P_1,\ldots,P_L)$ denote the operator such that 
		$$\mathcal{P}(\theta)=\tilde{\theta}$$
		for any $\theta\in\Theta$ where
		\begin{align*}
			\theta_{n}=(W^{(1)}_n,b^{(1)}_n,W^{(2)}_n,b^{(2)}_n,\ldots,W^{(L)}_n,b^{(L)}_n,W^{(L+1)}_n,b^{(L+1)}_n).
		\end{align*}
		and 
		\begin{align*}
			\tilde{\theta}_{n}=(P_1W^{(1)}_n,P_1b^{(1)}_n,P_2W^{(2)}_nP_1^\top,P_2b^{(2)}_n,\ldots,P_LW^{(L)}_n,P_Lb^{(L)}_n,W^{(L+1)}_nP_L^\top,b^{(L+1)}_n).
		\end{align*}
		Now if an optimization algorithm $\mathcal{A}$ guarantees to produce a convergent solution towards $\theta_{n}$ when $\theta^{(0)}$ is an initialization belong to the $(\delta/2)$-neighborhood of $\theta_{n}$, then for any $\mathcal{P}$, the algorithm $\mathcal{A}$ should guarantee to produce a convergent solution towards $\mathcal{P}(\theta_{n})$ when $\theta^{(0)}$ is an initialization belong to the $(\delta/2)$-neighborhood of $\mathcal{P}(\theta_{n})$. The reason is that the loss surface of the empirical risk keeps the same structure on $$B_\infty({\theta}_n,\delta/2)=\{\theta^{(0)}:\Delta_{\rm max}(\theta^{(0)}-\theta_{n})\le\delta/2\}$$
		and 
		$$B_\infty(\tilde{\theta}_n,\delta/2)=\{\mathcal{P}(\theta^{(0)}):\Delta_{\rm max}(\theta^{(0)}-\theta_{n})\le\delta/2\}.$$ 
		This implies that if the initialization $\theta^{(0)}$ belongs to $(\delta/2)$-neighborhood of any $\tilde{\theta}_n\in\tilde{\Theta}_n$, the algorithm $\mathcal{A}$ should guarantee to produce a convergent solution towards some $\tilde{\theta}_n\in\tilde{\Theta}_n$, which learns the an empirical risk minimizer $f_{\tilde{\theta}_{n}}$. The rest of the proof is to calculate the probability of an random initialization $\theta^{(0)}$ locates in the union of neighborhoods in $\mathcal{B}_{\delta/2}$, i.e, we targets for 
		$$\mathbb{P}(\theta^{(0)}\in\cup_{\tilde{\theta}_n\in\tilde{\Theta}_n} B_\infty(\tilde{\theta}_n,\delta/2)),$$
		where the probability is with respect to the randomness of initialization. Firstly, for any random initialization scheme and permutation operator $\mathcal{P}$, we have
		\begin{align}\label{prob1}
			\mathbb{P}(\theta^{(0)}\in B_\infty({\theta}_n,\delta/2))=\mathbb{P}(\mathcal{P}(\theta^{(0)})\in B_\infty(\mathcal{P}(\theta_n),\delta/2)),
		\end{align}
		by (\ref{perm_infty}) and the definition of permutation. With a little bit abuse of notation, let 
		$$\theta^{(0)}=(W^{(1)}_{(0)}, b^{(1)}_{(0)},\ldots, W^{(L+1)}_{(0)}, b^{(L+1)}_{(0)}).$$	
		If the initialization method uses identical random distributions for the entries of weights and biases within a layer, then we know that the entries of $W^{(l)}_{(0)}$ are independent and identically distributed, and entries of $b^{(l)}_{(0)}$ are independent and identically distributed. This further implies that for any permutation matrices $P_1,\ldots,P_L$,
		\begin{gather*}
			(W^{(1)}_{(0)},b^{(1)}_{(0)})\stackrel{d}{=}(P_1W^{(1)}_{(0)},P_1b^{(1)}_{(0)}),\\
			(W^{(l)}_{(0)},b^{(l)}_{(0)})\stackrel{d}{=}(P_lW^{(l)}_{(0)}P^\top_{l-1},P_lb^{(l)}_{(0)}),\quad  l=2,\ldots,L\\
			(W^{(L+1)}_{(0)},b^{(L+1)}_{(0)})\stackrel{d}{=}(W^{(L+1)}_{(0)}P^\top_{L},b^{(L+1)}_{(0)}),
		\end{gather*}
		where $\stackrel{d}{=}$ denotes the equivalence in distribution. Consequently, $\mathcal{P}(\theta^{(0)})$ has the same distribution of $\theta^{(0)}$. And we have
		\begin{align}\label{prob2}
			\mathbb{P}(\mathcal{P}(\theta^{(0)})\in B_\infty(\mathcal{P}(\theta_n),\delta/2))=\mathbb{P}(\theta^{(0)}\in B_\infty(\mathcal{P}(\theta_n),\delta/2)),
		\end{align}
		for any permutation $\mathcal{P}$.
		
		Combining (\ref{prob1}), (\ref{prob2}) and the property in (\ref{permuated_distinct}), we have
		\begin{align*}
			\mathbb{P}(\theta^{(0)}\in\cup_{\tilde{\theta}_n\in\tilde{\Theta}_n} B_\infty(\tilde{\theta}_n,\delta/2))&=\Sigma_{\tilde{\theta}\in\tilde{\Theta}} \mathbb{P}(\theta^{(0)}\in B_\infty(\tilde{\theta}_n,\delta/2))\\
			&=\Sigma_{\tilde{\theta}\in\tilde{\Theta}} \mathbb{P}(\theta^{(0)}\in B_\infty({\theta}_n,\delta/2))\\
			&=d_1^*\times\cdots\times d_L^* \times\mathbb{P}(\theta^{(0)}\in B_\infty({\theta}_n,\delta/2))\\
			&=d_1^*\times\cdots\times d_L^* \times\mathbb{P}(\Delta_{\rm max}(\theta^{(0)}-\theta_n)\le \delta/2).
		\end{align*}
		This completes the proof.

	\end{proof}

	\section{Supporting Lemmas}
	In this section, we give definitions of covering number, packing number of subsets in Euclidean space. We also present definitions of other complexity measures including VC-dimension, Pseudo-dimension, and Rademacher complexity. Lemmas regarding their properties and correlations are also given.
	\begin{definition}[Covering number]
		Let $(K,\Vert\cdot\Vert)$ be a metric space, let $C$ be a subset of $K$, and let $\epsilon$ be a positive real number. Let $B_\epsilon(x)$ denote the ball of radius $\epsilon$ centered at $x$. Then $C$ is called a $\epsilon$-covering of $K$, if $K\subset\cup_{x\in C}B_\epsilon(x).$ The covering number of the metric space $(K,\Vert\cdot\Vert)$ with any radius $\epsilon>0$ is the minimum cardinality of any $\epsilon$-covering, which is defined by $\mathcal{N}(K,\epsilon,\Vert\cdot\Vert)=\min\{\vert C\vert:C {\rm\ is\ a\ } \epsilon{\rm -covering\ of\ } K\}$.
	\end{definition}
	
	\begin{definition}[Packing number]
		Let $(K,\Vert\cdot\Vert)$ be a metric space, let $P$ be a subset of $K$, and let $\epsilon$ be a positive real number. Let $B_\epsilon(x)$ denote the ball of radius $\epsilon$ centered at $x$. Then $P$ is called a $\epsilon$-packing of $K$, if $\{B_\epsilon(x)\}_{x\in P}$ is pairwise disjoint. The $\epsilon$-packing number of the metric space $(K,\Vert\cdot\Vert)$ with any radius $\epsilon>0$ is the maximum cardinality of any $\epsilon$-packing, which is defined by $\mathcal{M}(K,\epsilon,\Vert\cdot\Vert)=\max\{\vert P\vert:P {\rm\ is\ a\ } \epsilon{\rm -packing\ of\ } K\}$.
	\end{definition}
	
	\begin{lemma}\label{cv_pk}
		Let $(K,\Vert\cdot\Vert)$ be a metric space, and for any $\epsilon>0$, let $\mathcal{N}(K,\epsilon,\Vert\cdot\Vert)$ and $\mathcal{M}(K,\epsilon,\Vert\cdot\Vert)$ denote the $\epsilon$-covering number and $\epsilon$-packing number respectively, then
		$$\mathcal{M}(K,2\epsilon,\Vert\cdot\Vert)\leq\mathcal{N}(K,\epsilon,\Vert\cdot\Vert)\leq\mathcal{M}(K,\epsilon/2,\Vert\cdot\Vert).$$
		\begin{proof}
			For simplicity, we write $\mathcal{N}_\epsilon=\mathcal{N}(K,\epsilon,\Vert\cdot\Vert)$ and $\mathcal{M}_\epsilon=\mathcal{M}(K,\epsilon,\Vert\cdot\Vert)$.
			We firstly proof $\mathcal{M}_{2\epsilon}\leq\mathcal{N}_\epsilon$  by contradiction. Let $P=\{p_1,\ldots,p_{\mathcal{M}_{2\epsilon}}\}$ be any maximal $2\epsilon$-packing of $K$ and $C=\{c_1,\ldots,c_{\mathcal{N}_{\epsilon}}\}$ be any minimal $\epsilon$-covering of $K$. If $\mathcal{M}_{2\epsilon}\geq\mathcal{N}_\epsilon+1$, then we must have $p_i$ and $p_j$ belonging
			to the same $\epsilon$-ball $B_\epsilon(c_k)$ for some $i\not= j$ and $k$. This means that the distance between $p_i$ and $p_j$ cannot be more than the diameter of the ball, i.e. $\Vert p_i-p_j\Vert\leq2\epsilon$, which leads to a contradiction since $\Vert p_i-p_j\Vert>2\epsilon$ by the definition of packing.
			
			Secondly, we prove $\mathcal{N}_{\epsilon}\leq\mathcal{M}_{\epsilon/2}$ by showing that each maximal $(\epsilon/2)$-packing $P=\{p_1,\ldots,p_{\mathcal{M}_\epsilon}\}$ is also a $\epsilon$-covering. Note that for any $x\in K\backslash P$, there exist a $p_i\in P$ such that $\Vert x-p_i\Vert\leq\epsilon$ since if this does not hold, then we can construct a bigger packing with
			$p_{\mathcal{M}_\epsilon+1}=x$. Thus $P$ is also a  $\epsilon$-covering and we have $\mathcal{N}_{\epsilon}\leq\mathcal{M}_\epsilon$ by the definition of covering.
		\end{proof}	
	\end{lemma}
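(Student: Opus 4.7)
The plan is to prove the two inequalities separately using elementary metric-space arguments. No prior results from the paper are needed, since this is a self-contained claim relating coverings to packings in an arbitrary metric space $(K, \Vert\cdot\Vert)$, and both definitions have already been stated just above the lemma.

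For the first inequality $\mathcal{M}(K, 2\epsilon, \Vert\cdot\Vert) \leq \mathcal{N}(K, \epsilon, \Vert\cdot\Vert)$, I would fix a maximal $2\epsilon$-packing $P = \{p_1, \ldots, p_M\}$ with $M = \mathcal{M}(K, 2\epsilon, \Vert\cdot\Vert)$ and a minimal $\epsilon$-covering $C = \{c_1, \ldots, c_N\}$ with $N = \mathcal{N}(K, \epsilon, \Vert\cdot\Vert)$. Since $C$ covers $K$, each $p_i$ lies in some ball $B_\epsilon(c_{k(i)})$, which defines a map $i \mapsto k(i)$ from $\{1,\ldots,M\}$ to $\{1,\ldots,N\}$. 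I would then argue by pigeonhole that this map is injective: if $k(i) = k(j)$ for distinct $i, j$, both $p_i$ and $p_j$ live in a common ball of radius $\epsilon$, so the triangle inequality gives $\Vert p_i - p_j\Vert \leq 2\epsilon$, contradicting the separation property of the $2\epsilon$-packing (otherwise the balls $B_{2\epsilon}(p_i)$ and $B_{2\epsilon}(p_j)$ would overlap). Injectivity yields $M \leq N$.

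For the second inequality $\mathcal{N}(K, \epsilon, \Vert\cdot\Vert) \leq \mathcal{M}(K, \epsilon/2, \Vert\cdot\Vert)$, I would establish the stronger statement that every maximal $(\epsilon/2)$-packing is itself an $\epsilon$-covering. Let $P = \{p_1, \ldots, p_M\}$ be a maximal $(\epsilon/2)$-packing. Suppose, toward contradiction, that some $x \in K$ satisfies $\Vert x - p_i\Vert > \epsilon$ for every $i$. Then $B_{\epsilon/2}(x)$ is disjoint from every $B_{\epsilon/2}(p_i)$, because any point $y$ in their intersection would force $\Vert x - p_i\Vert \leq \Vert x - y\Vert + \Vert y - p_i\Vert \leq \epsilon$, a contradiction. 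Hence $P \cup \{x\}$ is a strictly larger $(\epsilon/2)$-packing, violating maximality of $P$. Therefore every $x \in K$ has some $p_i$ within distance $\epsilon$, so $P$ is an $\epsilon$-covering, giving $\mathcal{N}(K, \epsilon, \Vert\cdot\Vert) \leq M = \mathcal{M}(K, \epsilon/2, \Vert\cdot\Vert)$.

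There is essentially no substantive obstacle here; the only care needed is to keep boundary conventions (strict versus non-strict inequalities, open versus closed balls) consistent with the definitions in force, so that the triangle-inequality chain yields a genuine contradiction rather than an equality. The two halves are dual in flavor: the first uses pigeonhole on covering balls to bound a packing from above, while the second upgrades a maximal packing into a covering by a simple extension argument.
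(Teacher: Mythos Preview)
Your proposal is correct and follows essentially the same approach as the paper: a pigeonhole (injectivity) argument for the first inequality, and the observation that a maximal $(\epsilon/2)$-packing is automatically an $\epsilon$-covering for the second. The only cosmetic difference is that you frame the pigeonhole step via an explicit map $i\mapsto k(i)$ rather than as a direct contradiction, but the content is identical.
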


	\begin{lemma}\label{cv_vol}
		Let $S$ be a subset of $\mathbb{R}^d$ with volume $V$, and let $\epsilon>0$ be a positive real number. Then, the covering number $\mathcal{N}(V,\epsilon,\Vert\cdot\Vert_\infty)$  and packing number $\mathcal{M}(V,\epsilon/2,\Vert\cdot\Vert_\infty)$ of $S$ satisfies:
		$$\mathcal{N}(V,\epsilon,\Vert\cdot\Vert_\infty)\le\mathcal{M}(V,\epsilon/2,\Vert\cdot\Vert_\infty)\leq V\times(\frac{2}{\epsilon})^d.$$
	\end{lemma}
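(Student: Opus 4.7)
The plan is to establish the two inequalities separately. The left inequality $\mathcal{N}(V,\epsilon,\Vert\cdot\Vert_\infty) \le \mathcal{M}(V,\epsilon/2,\Vert\cdot\Vert_\infty)$ is immediate: it is exactly the right-hand inequality of Lemma \ref{cv_pk} applied to the metric space $(S,\Vert\cdot\Vert_\infty)$, so no new work is required. All the substantive content lies in the second inequality $\mathcal{M}(V,\epsilon/2,\Vert\cdot\Vert_\infty) \le V\cdot(2/\epsilon)^d$, which I would establish by the standard volume-packing argument specialized to the $L^\infty$ geometry.

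Concretely, I would let $P=\{p_1,\ldots,p_M\}$ be any maximal $(\epsilon/2)$-packing of $S$ with $M = \mathcal{M}(S,\epsilon/2,\Vert\cdot\Vert_\infty)$. The first observation is that under $\Vert\cdot\Vert_\infty$ every ball $B_r(x)$ coincides with the axis-aligned cube $\prod_{i=1}^{d}(x_i-r,x_i+r)$, which has Lebesgue volume $(2r)^d$. The second observation is that the packing condition forces the half-radius cubes $B_{\epsilon/4}(p_1),\ldots,B_{\epsilon/4}(p_M)$ to be pairwise disjoint, and each of these has volume $(\epsilon/2)^d$. Summing the disjoint volumes and comparing to the ambient volume $V$ then yields
\begin{equation*}
M\cdot(\epsilon/2)^d \;\le\; V,
\end{equation*}
and rearranging produces the claimed bound $M \le V\cdot(2/\epsilon)^d$.

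The only real subtlety is the boundary: half-radius cubes centered at packing points close to the frontier of $S$ may protrude outside $S$, so strictly speaking their disjoint union is contained in an $(\epsilon/4)$-thickening of $S$ rather than in $S$ itself. The hard part will be making this boundary-handling precise without inflating the constants; fortunately, the slack in the target bound (the extra factor of $2^d$ in $(2/\epsilon)^d$ compared with the naive $1/\epsilon^d$) is exactly what absorbs the boundary discrepancy coming from passing to half-radius cubes. An alternative tidy route is to note that in every application of this lemma in the paper (within the proofs of Theorems \ref{cv_shallow} and \ref{cv_deep}), $S$ is a hypercube $[-B,B]^{\mathcal{S}_l}$ or a measurable subset thereof, so one may simply work inside the enclosing cube where the volume inequality holds exactly and the constants match. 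Once this minor issue is dispatched, the proof closes in a few lines.
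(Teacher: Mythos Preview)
Your proposal is correct and follows essentially the same route as the paper: invoke Lemma~\ref{cv_pk} for the left inequality, and bound the packing number by comparing the total volume of the disjoint $L^\infty$-cubes (each of side length $\epsilon/2$, hence volume $(\epsilon/2)^d$) to the volume $V$ of $S$. You are in fact more careful than the paper in flagging the boundary issue; the paper's own proof simply asserts that the total cube volume is at most $V$ without comment.
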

	
	\begin{proof}
		Consider a packing of $S$ with non-overlapping hypercubes of side length $\epsilon/2$ under the $L^\infty$ norm. The volume of each hypercube is $(\epsilon/2)^d$, and since the hypercubes do not overlap, the total volume of the hypercubes in the packing is at most the volume of $S$. Thus, we have:
		$$\mathcal{M}(V,\epsilon/2,\Vert\cdot\Vert_\infty)\cdot(\epsilon/2)^d\leq V,$$
		which implies:
		$$\mathcal{M}(V,\epsilon/2,\Vert\cdot\Vert_\infty)\leq V\times(\frac{2}{\epsilon})^d.$$
		Then by Lemma \ref{cv_pk}, $\mathcal{N}(V,\epsilon,\Vert\cdot\Vert_\infty)\le\mathcal{M}(V,\epsilon/2,\Vert\cdot\Vert_\infty)$, which completes the proof. 
		
	\end{proof}

	\begin{definition}[Shattering, Definition 11.4 in \cite{mohri2018foundations}]
		Let $\mathcal{F}$ be a family of functions from a set $\mathcal{Z}$ to $\mathbb{R}$. A set $\{z_1,\ldots,Z_n\}\subset\mathcal{Z}$ is said to be shattered by $\mathcal{F}$, if there exists $t_1,\ldots,t_n\in\mathbb{R}$ such that
		\begin{align*}
			\Bigg\vert\Big\{\Big[
			\begin{array}{lr}
				{\rm sgn}(f(z_1)-t_1)\\
				\ldots\\
				{\rm sgn}(f(z_n)-t_n)\\
			\end{array}\Big]:f\in\mathcal{F}
			\Big\}\Bigg\vert=2^n,
		\end{align*}
		where ${rm sgn}$ is the sign function returns $+1$ or $-1$ and $\vert\cdot\vert$ denotes the cardinality of a set. When they exist, the threshold values $t_1,\ldots,t_n$ are said to witness the shattering.
	\end{definition}

	\begin{definition}[Pseudo dimension, Definition 11.5 in \cite{mohri2018foundations}]
		Let $\mathcal{F}$ be a family of functions mapping from $\mathcal{Z}$ to $\mathbb{R}$. Then, the pseudo dimension of $\mathcal{F}$, denoted by ${\rm Pdim}(\mathcal{F})$, is the size of the largest set shattered by $\mathcal{F}$.
	\end{definition}
	
	\begin{definition}[VC dimension]
		Let $\mathcal{F}$ be a family of functions mapping from $\mathcal{Z}$ to $\mathbb{R}$. Then, the Vapnik–Chervonenkis (VC) dimension of $\mathcal{F}$, denoted by ${\rm VCdim}(\mathcal{F})$, is the size of the largest set shattered by $\mathcal{F}$ with all threshold values being zero, i.e., $t_1=\ldots,=t_n=0$.
	\end{definition}

	\begin{definition}[Empirical Rademacher Complexity, Definition 3.1 in \cite{mohri2018foundations}]
		Let $\mathcal{F}$ be a family of functions mapping from $\mathcal{Z}$ to $[a,b]$ and $S=(z_1,\ldots,z_n)$ a fixed sample of size m with elements in $\mathcal{Z}$. Then, the empirical Rademacher complexity of $\mathcal{F}$ with respect to the sample $S$ is defined as:
		\begin{equation*}
			\hat{\mathcal{R}}_S (\mathcal{F}) = \mathbb{E}_{\bf \sigma} \Bigg[ \sup_{f \in \mathcal{F}} \Bigg| \frac{1}{n} \sum_{i=1}^n \sigma_i f(x_i) \Bigg| \Bigg],
		\end{equation*}
		where ${\bf \sigma}=(\sigma_1,\ldots,\sigma_n)^\top$, with $\sigma_i$s independent uniform random variables taking values in $\{+1,-1\}$. The random variables $\sigma_i$ are called Rademacher variables.
	\end{definition}

	\begin{definition}[Rademacher Complexity, Definition 3.2 in \citet{mohri2018foundations}]
		Let $\mathcal{D}$ denote the distribution according to which samples are drawn. For any integer $n\ge1$, the Rademacher complexity of $\mathcal{F}$ is the expectation of the empirical Rademacher complexity over all samples of size $n$ drawn according to $\mathcal{D}$:
		\begin{equation*}
			\mathcal{R}_n(\mathcal{F}) =\mathbb{E}_{S\sim\mathcal{D}^n}\left[\hat{\mathcal{R}}_S(\mathcal{F})\right].
		\end{equation*}
	\end{definition}
	
	\begin{lemma}[Dudley’s Theorem, \cite{dudley1967sizes}] Let $\mathcal{F}$ be a set of functions $f:\mathcal{X}\to\mathbb{R}$. Then
		\begin{align*}
			{\mathcal{R}}_n(\mathcal{F}) \le 12\int_0^\infty \sqrt{\frac{\log\mathcal{N}(\mathcal{F},\epsilon,\Vert\cdot\Vert_\infty)}{n}}d\epsilon.
		\end{align*}
	\end{lemma}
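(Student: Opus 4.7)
The plan is to prove Dudley's entropy integral bound via the classical chaining argument. First I would set up a dyadic sequence of sup-norm covers: choose $\epsilon_0$ large enough that $\mathcal{N}(\mathcal{F},\epsilon_0,\|\cdot\|_\infty)=1$ (so the integrand vanishes beyond $\epsilon_0$), set $\epsilon_k = 2^{-k}\epsilon_0$, and let $\mathcal{C}_k$ be a minimal $\epsilon_k$-cover of $\mathcal{F}$ in $\|\cdot\|_\infty$. For each $f\in\mathcal{F}$, select $\pi_k(f)\in\mathcal{C}_k$ with $\|\pi_k(f)-f\|_\infty \le \epsilon_k$, and telescope $f = \pi_0(f) + \sum_{k\ge 0}\bigl(\pi_{k+1}(f)-\pi_k(f)\bigr)$, which converges in sup-norm because $\epsilon_k\to 0$.

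Next I would substitute this decomposition into the empirical Rademacher complexity. Since $|\mathcal{C}_0|=1$, the constant term $\pi_0(f)$ contributes zero to $\hat{\mathcal{R}}_S(\mathcal{F})$ (after centering) by the symmetry of the Rademacher variables, leaving
\begin{equation*}
\hat{\mathcal{R}}_S(\mathcal{F}) \le \sum_{k\ge 0} \mathbb{E}_{\sigma}\sup_{f\in\mathcal{F}}\Bigl|\tfrac{1}{n}\sum_{i=1}^n \sigma_i\bigl(\pi_{k+1}(f)-\pi_k(f)\bigr)(x_i)\Bigr|.
\end{equation*}
For each fixed $k$, the indexing set $G_k := \{\pi_{k+1}(f)-\pi_k(f) : f\in\mathcal{F}\}$ has cardinality at most $|\mathcal{C}_k|\cdot|\mathcal{C}_{k+1}| \le \mathcal{N}(\mathcal{F},\epsilon_{k+1},\|\cdot\|_\infty)^2$, and each element satisfies $\|g\|_\infty \le \epsilon_k+\epsilon_{k+1} = 3\epsilon_{k+1}$. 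I would then invoke Massart's finite-class lemma (a sub-Gaussian maximal inequality), which yields
\begin{equation*}
\mathbb{E}_{\sigma}\sup_{g\in G_k}\Bigl|\tfrac{1}{n}\sum_{i=1}^n \sigma_i g(x_i)\Bigr| \le 3\epsilon_{k+1}\sqrt{\tfrac{2\log(2|G_k|)}{n}} \le 6\epsilon_{k+1}\sqrt{\tfrac{\log\mathcal{N}(\mathcal{F},\epsilon_{k+1},\|\cdot\|_\infty)}{n}},
\end{equation*}
absorbing the $\log 2$ and factor of $2$ into the constant for moderately large covering numbers.

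Finally I would recognize the resulting geometric sum as a Riemann sum for the entropy integral. Because $\epsilon\mapsto \log\mathcal{N}(\mathcal{F},\epsilon,\|\cdot\|_\infty)$ is nonincreasing, each term is comparable to $\int_{\epsilon_{k+2}}^{\epsilon_{k+1}}\sqrt{\log\mathcal{N}(\mathcal{F},\epsilon,\|\cdot\|_\infty)/n}\,d\epsilon$ multiplied by a constant accounting for the dyadic spacing $\epsilon_{k+1}-\epsilon_{k+2}=\epsilon_{k+1}/2$. Summing telescopically over $k\ge 0$ and taking expectation over the sample $S\sim\mathcal{D}^n$ yields $\mathcal{R}_n(\mathcal{F})\le 12\int_0^\infty \sqrt{\log\mathcal{N}(\mathcal{F},\epsilon,\|\cdot\|_\infty)/n}\,d\epsilon$.

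The main obstacle is bookkeeping the universal constant so that it comes out to exactly $12$: one must carefully track the factor of $3$ from the diameter bound $\epsilon_k+\epsilon_{k+1}$, the factor $\sqrt{2}$ from Massart's lemma, and the factor $2$ arising from the integral-to-sum comparison on a dyadic grid. A secondary subtlety is ensuring the absolute value inside the supremum is handled correctly; this is typically addressed by a symmetrization argument reducing the two-sided supremum to twice the one-sided one, absorbing this into the constant. No finite-sample discretization is needed since we are working directly with the sup-norm cover, which dominates any empirical $L^2$ cover.
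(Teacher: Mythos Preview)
The paper does not actually prove this lemma: it is stated in the ``Supporting Lemmas'' appendix purely as a cited classical result from \cite{dudley1967sizes}, with no accompanying argument. So there is no paper proof against which to compare your proposal.

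That said, your proposal is the standard chaining proof and is essentially correct as a sketch. The dyadic cover construction, the telescoping decomposition $f=\pi_0(f)+\sum_k(\pi_{k+1}(f)-\pi_k(f))$, the cardinality bound $|G_k|\le \mathcal{N}(\mathcal{F},\epsilon_{k+1},\|\cdot\|_\infty)^2$, the diameter bound $3\epsilon_{k+1}$, the application of Massart's finite-class lemma, and the passage from the dyadic sum to the entropy integral via monotonicity of $\epsilon\mapsto\log\mathcal{N}(\mathcal{F},\epsilon,\|\cdot\|_\infty)$ are all the right ingredients. You have also correctly flagged the two genuine subtleties: (i) tracking the constants $3$, $\sqrt{2}$, and the dyadic factor $2$ to land exactly on $12$, and (ii) handling the absolute value in the Rademacher supremum, which does require either symmetrizing $G_k$ by including $-g$ for each $g$ (doubling the cardinality, which is harmless inside the logarithm) or reducing to a one-sided supremum at the cost of a factor of $2$. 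If you were to write this out in full you would need to make those two points explicit rather than absorb them informally, but the architecture of the argument is sound.
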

	Dudley’s Theorem gives a way to bound Rademacher complexity using covering number \citep{dudley1967sizes,dudley2010universal}.  And the covering and packing numbers can be bounded in terms of the VC dimension or Pseudo dimension \citep{haussler1995sphere,anthony1999neural}.

	\begin{lemma}[Theorem 12.2 in \cite{anthony1999neural}]
		Suppose that $\mathcal{F}$ is a class of functions from $X$ to the bounded interval $[0,B]\subset\mathbb{R}$. Given a sequence $x=(x_1,\ldots,x_n)\in\mathcal{X}^n$, we let  $\mathcal{F}|_x$ be the subset of $\mathbb{R}^n$ given by
		$$\mathcal{F}|_x=\{(f(x_1),\ldots,f(x_n)):f\in\mathcal{F}\}.$$
		we define the uniform covering number $\mathcal{N}_n(\mathcal{F},\epsilon,\Vert\cdot\Vert_\infty)$  to be the maximum, over all $x\in\mathcal{X}^n$, of the covering number $\mathcal{N}(\mathcal{F}|_x,\epsilon,\Vert\cdot\Vert_\infty)$, that is
		$$\mathcal{N}_n(\mathcal{F},\epsilon,\Vert\cdot\Vert_\infty)=\max\{\mathcal{N}(\mathcal{F}|_x,\epsilon,\Vert\cdot\Vert_\infty): x\in\mathcal{X}^n\}.$$
		Let $\epsilon>0$ and suppose the pseudo-dimension of $\mathcal{F}$ is $d$. Then
		\begin{align*}
			\mathcal{N}_n(\mathcal{F},\epsilon,\Vert\cdot\Vert_\infty)\le \sum_{i=1}^d {\binom{n}{i}}(\frac{B}{\epsilon})^i,
		\end{align*}
		which is less that $(enB/(\epsilon d))^d$ for $n\ge d$.
	\end{lemma}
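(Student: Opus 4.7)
My plan is to follow the classical Pollard--Haussler proof that converts the $\ell_\infty$ covering question into a combinatorial counting problem, then applies a Sauer--Shelah-style bound driven by the pseudo-dimension.

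First I would fix an arbitrary sample $x=(x_1,\ldots,x_n)\in\mathcal{X}^n$, since by the definition of $\mathcal{N}_n$ it suffices to bound $\mathcal{N}(\mathcal{F}|_x,\epsilon,\Vert\cdot\Vert_\infty)$ uniformly in $x$. Using the standard fact that the $\epsilon$-covering number is dominated by the $\epsilon$-packing number (a consequence of Lemma \ref{cv_pk}), I would reduce the problem to bounding the cardinality $N$ of any $\epsilon$-separated family $f_1,\ldots,f_N\in\mathcal{F}$ in the $\ell_\infty$ sense on $x$, i.e.\ for every $i\ne j$ there exists an index $k$ with $|f_i(x_k)-f_j(x_k)|>\epsilon$.

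Next I would discretize each function by setting $g_j(x_k)=\lfloor f_j(x_k)/\epsilon\rfloor\in\{0,1,\ldots,K\}$ with $K=\lfloor B/\epsilon\rfloor$. The $\epsilon$-separation forces the quantized vectors $(g_j(x_1),\ldots,g_j(x_n))$ to be pairwise distinct, so $N$ is at most the number of distinct quantized vectors arising from $\mathcal{F}|_x$. Each such quantized vector is determined by the sign pattern induced by the threshold class $\mathcal{H}=\{(z,t)\mapsto\mathrm{sgn}(f(z)-t):f\in\mathcal{F}\}$ on the $nK$ test points $\{(x_k,\ell\epsilon):1\le k\le n,\ 1\le \ell\le K\}$, and by the definition of pseudo-dimension we have $\mathrm{VCdim}(\mathcal{H})=d$. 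A direct application of Sauer--Shelah to $\mathcal{H}$ on these points already yields $N\le\sum_{i=0}^d\binom{nK}{i}$, which suffices to deduce the simplified form $(enB/(\epsilon d))^d$ for $n\ge d$ via the tail estimate $\sum_{i=0}^d\binom{m}{i}\le(em/d)^d$ applied with $m=nK$. To obtain the sharper stated bound $\sum_{i=1}^d\binom{n}{i}(B/\epsilon)^i$, I would instead invoke a refined column-wise shattering argument (Haussler's combinatorial lemma for bounded-integer-valued classes) that exploits the product structure $\{x_1,\ldots,x_n\}\times\{\epsilon,\ldots,K\epsilon\}$ of the test grid and thereby decouples the roles of sample size $n$ and quantization resolution $K$.

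The main obstacle is precisely this refined combinatorial step. The naive Sauer--Shelah bound on $nK$ points mixes the two scales, whereas the claimed bound requires the sample size $n$ to appear only through $\binom{n}{i}$ while $B/\epsilon$ enters only as a power. Establishing this calls for an induction on the pseudo-dimension $d$: shatter a single column (i.e.\ a fixed $x_k$) to peel off one integer level at a time, and recursively apply the inductive hypothesis to the restricted class on the remaining $n-1$ columns. Assembling the induction carefully produces the coefficient $\binom{n}{i}(B/\epsilon)^i$ at each level $i$, after which the $(enB/(\epsilon d))^d$ form follows from the same partial-binomial-sum inequality as above.
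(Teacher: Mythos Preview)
The paper does not supply its own proof of this lemma: it is stated in the appendix purely as a supporting result, with the attribution ``Theorem 12.2 in \cite{anthony1999neural}'' and no accompanying argument. So there is nothing in the paper to compare your proposal against.

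That said, your outline is sound and is precisely the classical Pollard--Haussler argument that Anthony and Bartlett present. The reduction from covering to packing, the $\epsilon$-quantization step (your observation that an $\ell_\infty$ separation of more than $\epsilon$ forces distinct integer levels is correct), and the passage to the threshold class with VC dimension equal to the pseudo-dimension are all standard and accurate. You are also right that the naive Sauer--Shelah bound on $nK$ points only gives $\sum_{i\le d}\binom{nK}{i}$, which is enough for the $(enB/(\epsilon d))^d$ consequence but not for the sharper sum $\sum_{i=1}^d\binom{n}{i}(B/\epsilon)^i$; the latter indeed requires the column-wise inductive refinement you describe, which decouples the $n$ sample points from the $K\le B/\epsilon$ threshold levels. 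This is exactly how the cited reference proceeds, so your plan is on target.
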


\end{document}